\definecolor{light-gray}{gray}{0.90}
\newtheorem{theorem3}{Theorem}
\newtheorem{theorem4}{Theorem}
\newtheorem{prop}[theorem3]{Proposition}
\newtheorem{lemma}[theorem4]{Lemma}
\theoremstyle{definition}
\theoremstyle{remark}
\title{Boosting Graph Pooling with Persistent Homology}
\author{Chaolong Ying,~~ Xinjian Zhao,~~ Tianshu Yu\thanks{Corresponding author}\\
  School of Data Science, The Chinese University of Hong Kong, Shenzhen\\
  \texttt{\{chaolongying,xinjianzhao1\}@link.cuhk.edu.cn},~~ \texttt{yutianshu@cuhk.edu.cn} 
}
\begin{document}

\maketitle

\begin{abstract}
Recently, there has been an emerging trend to integrate persistent homology (PH) into graph neural networks (GNNs) to enrich expressive power. However, naively plugging PH features into GNN layers always results in marginal improvement with low interpretability. In this paper, we investigate a novel mechanism for injecting global topological invariance into pooling layers using PH, motivated by the observation that filtration operation in PH naturally aligns graph pooling in a cut-off manner. In this fashion, message passing in the coarsened graph acts along persistent pooled topology, leading to improved performance. Experimentally, we apply our mechanism to a collection of graph pooling methods and observe consistent and substantial performance gain over several popular datasets, demonstrating its wide applicability and flexibility. 
\end{abstract}

\section{Introduction}\label{sec:intro}

Persistent homology (PH) is a powerful tool in the field of topological data analysis, which is capable of evaluating stable topological invariant properties from unstructured data in a multi-resolution fashion~\cite{edelsbrunner2022computational}. Concretely, PH derives an increasing sequence of simplicial complex subsets by applying a filtration function (see Fig.~\ref{fig:1a}). According to the fact that PH is at least as expressive as Weisfeiler-Lehman (WL) hierarchy~\cite{horn2021topological}, there recently emerged a series of works seeking to merge PH into graph neural networks (GNNs), delivering competitive performance on specific tasks~\cite{pmlr-v108-zhao20d, horn2021topological}. Standard schemes of existing works achieve this by employing pre-calculated topological features~\cite{pmlr-v108-zhao20d} or placing learnable filtration functions in the neural architectures~\cite{hofer2020graph, horn2021topological}. Such integration of PH features is claimed to enable GNNs to emphasize persistent topological sub-structures. However, it is still unclear to what extent the feature-level integration of PH is appropriate and how to empower GNNs with PH other than utilizing features.

Graph pooling (GP) in parallel plays an important role in a series of graph learning methods~\cite{grattarola2022understanding}, which hierarchically aggregates an upper-level graph into a more compact lower-level graph. Typically, GP relies on calculating an assignment matrix taking into account local structural properties such as community~\cite{muller2023graph} and cuts~\cite{bianchi2020spectral}. Though the pooling paradigm in convolutional neural networks (CNNs) is quite successful~\cite{krizhevsky2012imagenet}, some researchers raise concerns about its effectiveness and applicability in graphs. For example,~\cite{mesquita2020rethinking} challenges the local-preserving usage of GP by demonstrating that random pooling even leads to similar performance. Till now, it remains opaque what property should be preserved for pooled topology to better facilitate the downstream tasks.

From Fig.~\ref{fig:1a}, it is readily observed that PH and GP both seek to coarsen/sparsify a given graph in a hierarchical fashion: while PH gradually derives persistent sub-topology (substructures such as cycles) by adjusting the filtering parameter, GP obtains a sub-graph by performing a more aggressive cut-off. In a sense of understanding a graph through a hierarchical lens, PH and GP turn out to align with each other well.

Driven by this observation, in this paper, we investigate the mechanism of aligning PH and GP so as to mutually reinforce each other.  To this end, we conduct experiments by running a pioneer GP method DiffPool~\cite{ying2018hierarchical} to conduct graph classification on several datasets and at the same time use the technique in \cite{hofer2020graph} to compute PH information. We manually change the pooling ratio and see what proportion of meaningful topological information (characterized by the ratio of non-zero persistence) is naturally preserved at the final training stage. Surprisingly, the correspondence is quite stable regardless of different datasets (see Fig.~\ref{fig:1b}), which implies the monotone trend between the pooling ratio and non-zero persistence is commonly shared by a large range of graph data. As a consequence, we develop a natural way to integrate PH and GP in both feature and topology levels. Concretely, in addition to concatenating vectorized PH diagram as supplementary features, we further enforce the coarsened graph to preserve topological information as much as possible with a specially designed PH-inspired loss function. Hence we term our method Topology-Invariant Pooling (TIP). TIP can be flexibly injected into a variety of existing GP methods, and demonstrates a consistent ability to provide substantial improvement over them. We summarize our contributions as follows:
\begin{itemize}
    \item We for the first time investigate the way of aligning PH with GP, by investigating the monotone relationship in between.
    \item We further design an effective mechanism to inject PH information into GP at both feature and topology levels, with a novel topology-preserving loss function.
    \item Our mechanism can be flexibly integrated with a variety of GP methods, achieving consistent and substantial improvement over multiple datasets.
\end{itemize}

\begin{figure*}[tbp]
    \centering
    \subfigure[Hierarchical view of GP and PH]{
        \includegraphics[width=0.3\textwidth]{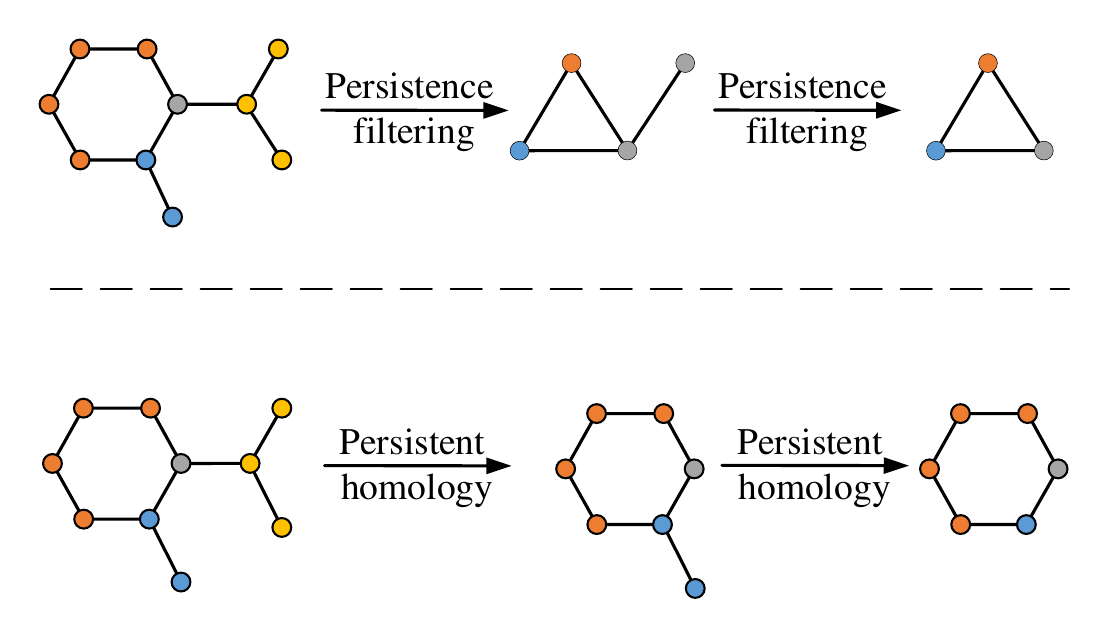}
    \label{fig:1a}
    }
    \subfigure[Alignment of GP and PH]{
	\includegraphics[width=0.3\textwidth]{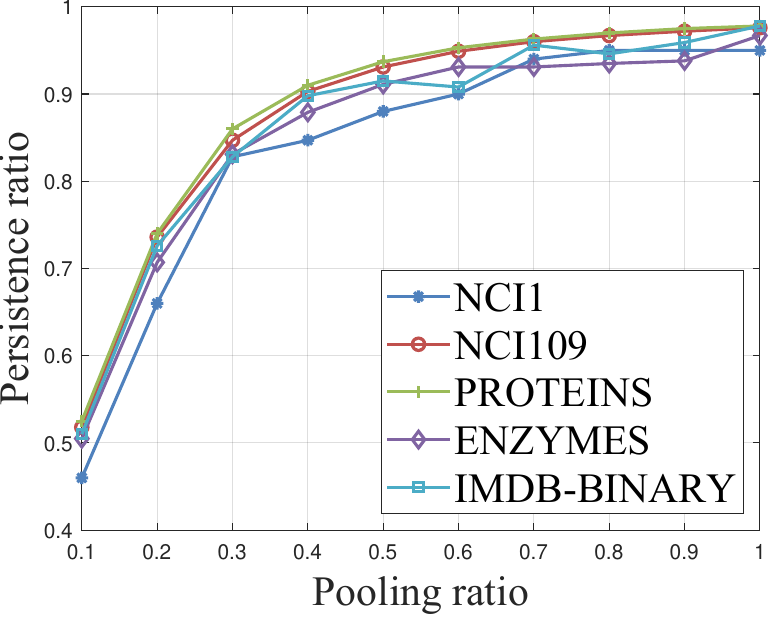}
    \label{fig:1b}
    }
    \subfigure[Persistence diagrams]{
    	\includegraphics[width=0.3\textwidth]{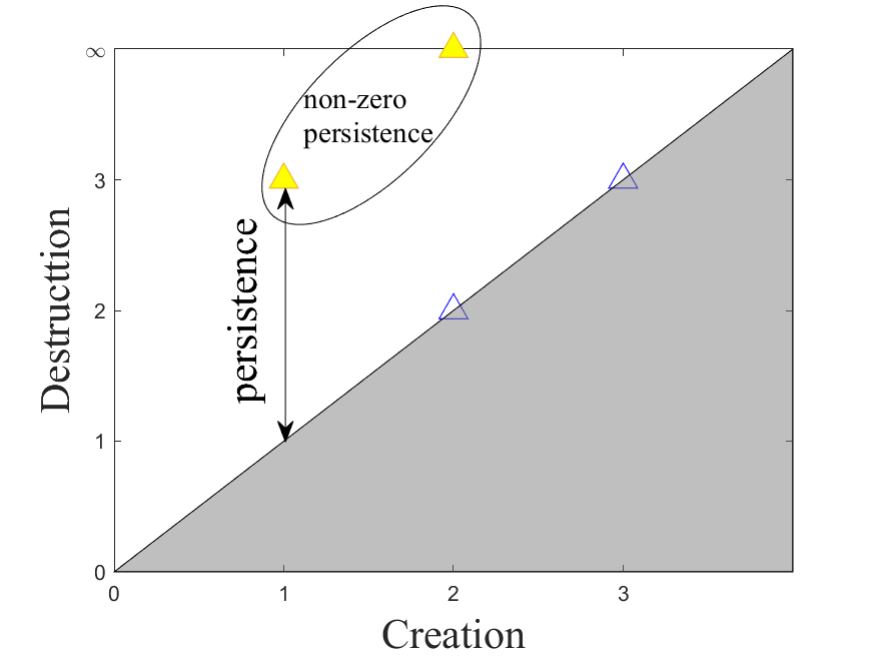}
    \label{fig:1c}
    }
    \caption{Illustration of Graph Pooling (GP) and Persistent Homology (PH). (a) GP and PH share a similar hierarchical fashion by coarsening a graph. (b) As a motivating experiment, we gradually change pooling ratio  and count how persistence ratio (ratio of non-zero persistence) changes with it. (c) Illustration of persistence diagrams.}
    \label{fig.1}
\end{figure*}



\section{Related work}\label{sec:related}

\textbf{Graph pooling.} 
Graph pooling has been used in various applications, which can reduce the graph size while preserving its structural information. Early methods are based on clustering to coarsen graphs, such as the greedy clustering method Graclus~\cite{dhillon2007weighted}, non-negative matrix factorization of the adjacency matrix~\cite{bacciu2019non}, and spectral clustering~\cite{ma2019graph}.
Recently, learnable graph pooling methods have gained popularity, which learn to select important nodes in an end-to-end manner. DiffPool~\cite{ying2018hierarchical} follows a hierarchical learning structure by utilizing GNNs to learn clusters and gradually aggregate nodes into a coarser graph. MinCutPool~\cite{bianchi2020spectral} optimizes a normalized cut objective to partition graphs into clusters. DMoNPool~\cite{muller2023graph} optimizes the modularity of graphs to ensure high-quality clusters. SEP~\cite{wu2022structural} generates clusters in different hierarchies simultaneously without compromising local structures. These methods are classified as dense pooling due to the space complexity they incur. Despite their effectiveness, dense pooling methods have been criticized for high memory cost and complexity~\cite{cangea2018towards}. Therefore, various sparse pooling methods have been proposed, such as Top-K~\cite{gao2019graph}, ASAPool~\cite{ranjan2020asap}, and SAGPool~\cite{lee2019self}. These methods coarsen graphs by selecting a subset of nodes based on a ranking score. As they drop some nodes in the pooling process, these methods are criticized for their limited capacity to retain essential information, with potential effects on the expressiveness of preceding GNN layers~\cite{bianchi2023expressive}.


\textbf{Persistent homology in GNNs.}
PH is a technique to calculate topological features of structured data, and many approaches have been proposed to use PH in graph machine learning due to the high expressiveness of topological features on graphs~\cite{hofer2017deep}.  Since non-isomorphic graphs may exhibit different topological features, the combination of PH and the Weisfeiler-Lehman (WL) algorithm leads to stronger expressive power~\cite{rieck2019persistent}. This encourages further exploration on equipping GNNs with topological features.~\cite{pmlr-v108-zhao20d} propose that message passing in GNNs can be effectively reweighted using topological features.~\cite{hofer2020graph} and~\cite{horn2021topological} provide theoretical and practical insights that filtrations in PH can be purely learnable, enabling flexible usage of topological features in GNNs. However, existing methods tend to view PH merely as a tool for providing supplementary information to GNNs, resulting in unsatisfactory improvements and limited interpretability.




\section{Background}


We briefly review the background of this topic in this section, as well as elaborate on the notations.

Let $\mathcal{G} = (V, E)$ be an undirected graph with $n$ nodes and $m$ edges, where $V$ and $E$ are the node and the edge sets, respectively.  Nodes in attributed graphs are associated with features, and we denote by $V = \{(v, \mathbf{x}_v)\}_{v \in 1:n}$  the set of nodes $v$ with $d$ dimensional attribute $\mathbf{x}_v$. It is also practical to represent the graph with an adjacency matrix $\mathbf{A} \in \{0, 1\}^{n \times n}$ and the node feature matrix $\mathbf{X} \in \mathbb{R}^{n \times d}$.
\paragraph{Graph Neural Networks.}
We focus on the general message-passing GNN framework that updates node representations by iteratively aggregating information from neighbors~\cite{gilmer2017neural}. Concretely, the $k$-th layer of such GNNs can be expressed as:
\begin{equation}
\label{mpnn}
    \mathbf{X}^{(k)}=\mathrm{M}\left(\mathbf{A}, \mathbf{X}^{(k-1)} ; \mathbf{\theta}^{(k)}\right),
\end{equation}
where $\mathbf{\theta}^{(k)}$ is the trainable parameter, and $\mathrm{M}$ is the message propagation function. Numbers of $\mathrm{M}$ have been proposed in previous research~\cite{kipf2016semi,hamilton2017inductive}. 
A complete GNN is typically instantiated by stacking multiple layers of Eq.~\ref{mpnn}. Hereafter we denote by $\mathrm{GNN}(\cdot)$ an arbitrary such multi-layer GNN for brevity.

\paragraph{Dense Graph Pooling.}

GP in GNNs is a special layer designated to produce a coarsened or sparsified sub-graph.
Formally, GP can be formulated as $\mathcal{G} \mapsto \mathcal{G}_P = (V_P, E_P)$ such that the number of nodes $|V_P| \leq n$. 
GP layers can be placed into GNNs in a hierarchical fashion to persistently coarsen the graph.
Typical GP approaches~\cite{ying2018hierarchical, bianchi2020spectral, muller2023graph} rely on learning a soft cluster assignment matrix $\mathbf{S}^{(l)} \in \mathbb{R}^{n_{l-1} \times n_l}$:
\begin{equation}
\label{S=}
\mathbf{S}^{(l)} = \mathrm{softmax}\left(\mathrm{GNN}^{(l)} \left(\mathbf{A}^{(l-1)}, \mathbf{X}^{(l-1)}\right)\right).
\end{equation}
Subsequently, the coarsened adjacency matrix at the $l$-th pooling layer is calculated as
\begin{equation}
\label{SAS}
    \mathbf{A}^{(l)} = \mathbf{S}^{(l)\top}\mathbf{A}^{(l-1)}\mathbf{S}^{(l)},
\end{equation}
and the corresponding node representations are calculated as
\begin{equation}
\label{X}
    \mathbf{X}^{(l)} = \mathbf{S}^{(l)\top} \mathrm{GNN}^{(l)}\left(\mathbf{A}^{(l-1)}, \mathbf{X}^{(l-1)}\right).
\end{equation}
These approaches differ from each other in the way to produce $\mathbf{S}$, which is used to inject a bias in the formation of clusters. In our work, we select three GP methods, i.e., DiffPool~\cite{ying2018hierarchical}, MinCutPool~\cite{bianchi2020spectral}, and DMoNPool~\cite{muller2023graph}, to cope with. Details of the pooling layers in these methods are summarized in Appendix~\ref{appendix_graph_pooling}.

\paragraph{Topological Features of Graphs.} A simplicial complex $K$ consists of a set of simplices of certain dimensions. Each simplex $\gamma \in K$ has a set of faces, and each face $\tau \in \gamma$ has to satisfy $\tau \in K$. An element $\gamma \in K$ with $|\gamma|=k+1$ is called a $k$-simplex, which we denote by writing $\mathrm{dim} \ \gamma = k$. Furthermore, if $k$ is maximal among all simplices in $K$, then $K$ is referred to as a $k$-dimensional simplicial complex. A graph can be seen as a low-dimensional simplicial complex that only contains 0-simplices (vertices) and 1-simplices (edges)~\cite{horn2021topological}. The simplest kind of topological features describing graphs are Betti numbers, formally denoted as $\beta_0$ for the number of connected components and $\beta_1$ for the number of cycles. 

Despite the limited expressive power of these two numbers, it can be improved by evaluating them alongside a filtration. Filtrations are scalar-valued functions of the form $f: V \cup E \rightarrow \mathbb{R}$. Changes in the Betti numbers, named as persistent Betti numbers, can subsequently be monitored throughout the progress of the filtration: by considering a threshold $(a \in \mathbb{R})$, we can analyze the subgraph originating from the pre-image of $((-\infty, a])$ of $f$, denoted as $(f^{-1}((-\infty, a]))$. The image of $f$ leads to a finite set of values $a_1 < \cdots < a_n$ and generates a sequence of nested subgraphs of the form 
$\emptyset \subseteq \mathcal{G}_0 \subseteq \ldots  \mathcal{G}_k \ldots \subseteq \mathcal{G}_n=\mathcal{G}$,
where $\mathcal{G}_k = (V_k, E_k)$ is a subgraph of $\mathcal{G}$ with $V_k:=\left\{v \in V \mid f\left(\mathbf{x}_v\right) \leq a_k\right\}$ and $E_k:=\left\{(v, w) \in E \mid \max \left\{f\left(x_v\right), f\left(x_w\right)\right\} \leq a_k\right\}$. This process is also known as persistent homology (denoted as $ph(\cdot)$) on graphs. Typically, persistent Betti numbers are summarized in a persistence diagram (PD) as $\mathrm{ph}(\mathcal{G}, f)[i] = \mathcal{D}_i$, where $i \in [0, 1, ...]$ is the dimension of topological features. PD is made up of tuples $(a_i, a_j) \in \mathbb{R}^2$, with $a_i$ and $a_j$ representing the creation and destruction of a topological feature respectively (see Fig. \ref{fig:1c}). The absolute difference in function values $|a_j - a_i|$ is called the persistence of a topological feature, where high persistence corresponds to features of the function, while low persistence is typically considered as noise~\cite{horn2021topological,rieck2023expressivity}. 




\section{Methodology}\label{method}

\subsection{Overview}
An overview of our method is shown in Fig. \ref{fig:overview}, where the shaded part corresponds to one layer of Topology-Invariant Pooling. The upper part is the GP process and the lower part is the injection of PH. Let $(\mathbf{A}^{(0)},\mathbf{X}^{(0)})$ be the input graph. We consider to perform a GP at the $(l-1)$-th layer. After obtaining a coarsened (densely connected) graph $(\mathbf{A}^{(l)},\mathbf{X}^{l})$ with a standard GP method, we resample the coarsened graph using Gumbel-softmax trick as $\mathbf{A}^{\prime (l)}$ in order to make it adapt to PH. 
Then, this coarsened graph is further reweighted injecting persistence, and is optimized by minimizing the topological gap $\mathcal{L}_{topo}$ from the original graph, yielding $(\mathbf{A}^{(l)},\mathbf{X}^{l})$. By stacking multiple TIP layers, hierarchical pooling emphasizing topological information can be achieved.
In the following sections, we elaborate on the detailed design of our mechanism.

\begin{figure*}[tb]
    \centering
    \includegraphics[width=1.0\textwidth]{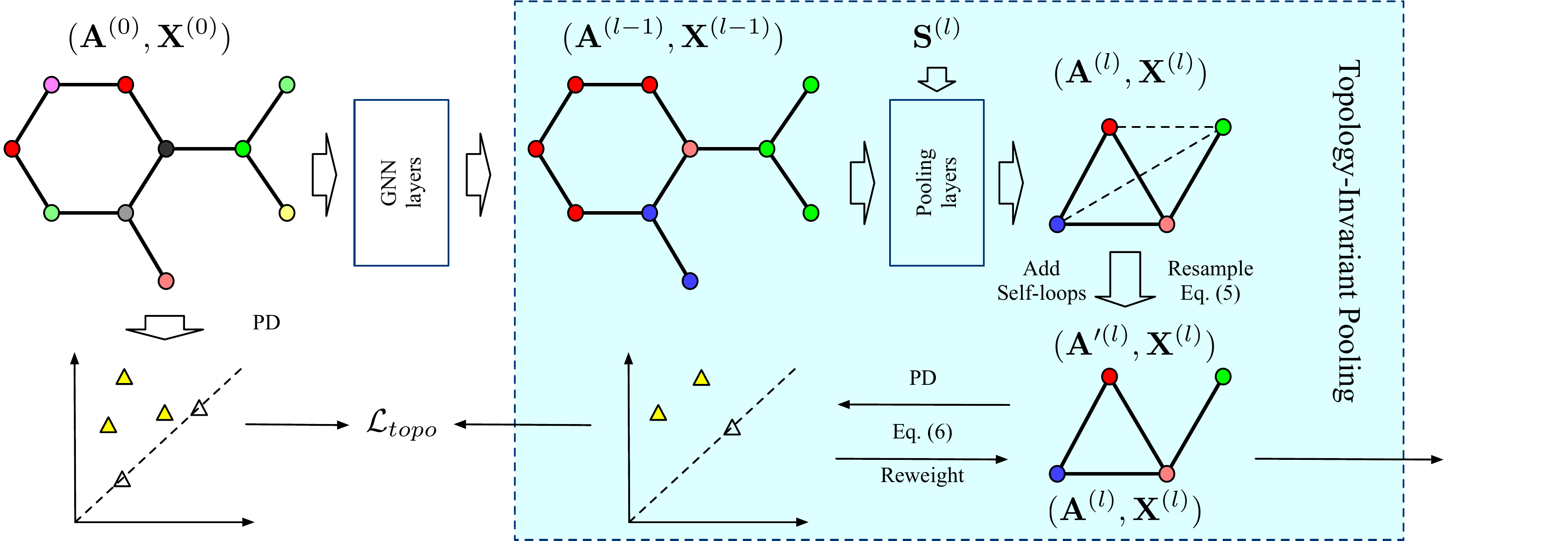}
    \caption{Overview of our method. The shaded part is one layer of Topology-Invariant Pooling. }
    \label{fig:overview}
\end{figure*}

\subsection{Topology-Invariant Pooling}
\label{TIP_method}

In many real-world applications, the topology of graphs are of utmost importance~\cite{swenson2020persgnn, ying2021multiobjective,hofer2020graph}. However, typical GNNs fail to capture certain topological structures in graphs, such as cycles~\cite{bouritsas2022improving, you2021identity, huang2022boosting}. Moreover, in dense graph pooling, graphs are pooled without preserving any topology. Even if we manage to make GNN topology-aware, the coarsened graph is nearly fully connected and has no meaningful topology at all, impairing the use of GNNs in these tasks. To overcome these limitations, we propose to inject topological information into GP. We resort to PH to characterize the importance of edges. 

The core of PH is the notion of filtration, the selection of which presents a challenging task. As the coarsened graph evolves in each training step, integrating PH into GP demands multiple computations of filtrations. To address this, we place recently proposed learnable filtration (LF) functions~\cite{hofer2020graph} to incorporate PH information for flexibility and efficiency. LF relies on node features and graph topology, which are readily available in GP. Consequently, LF can be seamlessly integrated into GP with minimal computational overhead. Specifically, we employ an MLP network $\Phi(\cdot)$ as the filtration function together with $\mathrm{sigmoid}(\cdot)$ to map node features $\mathbf{X} \in \mathbb{R}^{n \times d}$ into $n$ scalar values. Recently, an increasing amount of attention has been devoted to cycles~\cite{bouritsas2022improving, you2021identity, huang2022boosting} due to their significant relevance to downstream tasks in various domains such as biology~\cite{koyuturk2004efficient}, chemistry~\cite{murray2009rise}, and social network analysis~\cite{jiang2010finding}. Recognizing that cycles offer an intuitive representation of graph structure~\cite{moor2020topological, hofer2019connectivity}, and preliminary experiments, shown in Appendix \ref{sec:ablation}, indicate that the additional inclusion of zero-dimensional topological features merely increases runtime, thus we instead focus on the one-dimensional PDs associated with cycles. For those edges do not form cycles, their creation and destruction are the same, leading to zero persistence.
Following the standard way in GP (Eq. \ref{S=} \ref{SAS} \ref{X}), we additionally propose the subsequent modules to inject PH into GP at both feature and topology levels.

\paragraph{Resampling.}
One major limitation of utilizing LF proposed in~\cite{hofer2020graph} is that the computation process is unaware of edge weights, i.e. edges with non-zero weights will be treated equally, so PH cannot directly extract meaningful topology from $\mathbf{A}^{(l)}$. Besides, rethinking GP in Eq. \ref{SAS}, the coarsened adjacency matrix has limited expressive power for two reasons. First, although $\mathbf{S}^{(l)}$ is a soft assignment matrix obtained by  $\mathrm{softmax}(\cdot)$, each element still has nonzero values, i.e. $\mathbf{A}^{(l)}$ is always densely connected. Second, the edge weights may span a wide range by multiplication (refer to Appendix \ref{appendix:vis} for empirical evidence). These drawbacks hinder the stability and generalization power of the subsequent message passing layers~\cite{gong2019exploiting}. None of the existing GP methods can handle these problems properly.

Therefore, we resample the coarsened adjacency $\mathbf{A}^{(l)}$ obtained from a normal GP layer (Eq. \ref{SAS}) as:
\begin{equation}
\label{normalize}
    \mathbf{A}^{\prime (l)}= \mathrm{resample}\left(   \frac{\mathbf{A}^{(l)}-\min (\mathbf{A}^{(l)})}{\max (\mathbf{A}^{(l)})-\min (\mathbf{A}^{(l)})} \right),
\end{equation}
where $\mathbf{A}^{(l)}$ is first normalized in the range of $[0, 1]$, and  $\mathrm{resample}(\cdot)$ is performed independently for each matrix entry using the Gumbel-softmax trick~\cite{jang2016categorical}. 
In practice, only the upper triangular matrix is resampled to make it symmetric and we add self-loops to the graph.

\paragraph{Persistence Injection.}
Now $\mathbf{A}^{\prime (l)} \in \{0, 1\}^{n_l \times n_l}$ is a sparse matrix without edge features so we can easily inject topological information into it. For a resampled graph with $\mathbf{A}^{\prime (l)}$ and $\mathbf{X}^{ (l)}$, we formulate the persistence injection as:
\begin{equation}
\begin{aligned}
    \mathcal{\tilde{D}}_1 &= \mathrm{ph}(\mathbf{A}^{\prime (l)}, \mathrm{sigmoid}(\Phi(\mathbf{X}^{(l)})))[1] \\ 
    \mathbf{A}^{(l)} &= \mathbf{A}^{\prime (l)} \odot \mathrm{to\_dense}(\mathcal{\tilde{D}}_1[1]-\mathcal{\tilde{D}}_1[0]),
\end{aligned}
\end{equation}
where $\odot$ is the Hadamard product, $\mathrm{to\_dense()}$ means transforming sparse representations in terms of edges to dense matrix representations, $\mathcal{\tilde{D}}_1$ is the augmented 1-dimensional PDs by placing the tuples correspond to self-loop edges on the diagonal part of original PDs $\mathcal{D}_1$, $\mathcal{\tilde{D}}_1[i]$ is the $i$-th value in each tuple of $\mathcal{\tilde{D}}_1$, and we denote the updated adjacency matrix after persistence injection still as $\mathbf{A}^{(l)}$ for notation consistency. Persistence injection can actually be regarded as a reweighting process. Since the filtration values are within $[0, 1]$,  $\mathbf{A}^{(l)}$  after persistence injection is guaranteed to have edge weights in the range of $[0, 1]$ and is passed to the next pooling layer.




\paragraph{Topological Loss Function.}
The aforementioned mechanism can explicitly inject topological information into graphs, but it relies on the condition that the coarsened graph retains certain essential sub-topology. To this end, we propose an additional loss function to guide the GP process. 

Intuitively, the coarsened graph should exhibit similarity to the original graph in terms of topology. Since the computation of PH is differentiable, one possible approach is to directly minimize the differences between the PDs of the original graph and the coarsened graph. However, this implementation would require computing the Wasserstein distance between two PDs through optimal transport~\cite{yan2022neural}, which is intractable in training due to its complexity. Considering that our objective is to estimate the difference, we instead propose vectorizing the PDs and minimizing their high-order statistical features~\cite{okabe2018attentive}. Specifically, we use several transformations (denoted as $\mathrm{transform(\cdot)}$) and concatenate the output, including triangle point transformation, Gaussian point transformation and  line point transformation introduced in \cite{carriere2020perslay} to convert the tuples in PD into vector $\mathbf{h}_t$ $(t\in [1, m])$. We calculate the mean vector $\mathbf{\mu}$ as well as the second-order statistics as the standard deviation vector $\mathbf{\sigma}$ as:
\begin{equation}
\label{vector}
\begin{aligned}
    \mathbf{h}_t = \mathrm{transform(\mathcal{\tilde{D}}_1)}\qquad \qquad \qquad \\
    \mathbf{\mu} = \frac{1}{m} \sum_{t=1}^{m} \mathbf{h}_t  , \qquad \mathbf{\sigma} = \sqrt{\frac{1}{m}\sum_{t=1}^{m} \mathbf{h}_t \odot \mathbf{h}_t - \mathbf{\mu} \odot \mathbf{\mu}  }
\end{aligned}
\end{equation}

In this manner, the difference between PDs can be estimated through the comparison of their statistics in the features, which is the concatenation of the mean and variance vectors. To further regularize the topological difference between layers, we introduce a topological loss term defined as:
\begin{equation}
\mathcal{L}_{topo }=\frac{1}{Ld}  \sum_{l=1}^L \sum_{i=1}^{d}\left(\left(\mu_i^{(l)} \| \sigma_i^{(l)}\right)-\left(\mu_i^{(0)} \| \sigma_i^{(0)}\right)\right)^2,
\end{equation}
where $(\cdot||\cdot)$ stands for the concatenation operation, $L$ is the number of pooling layers, and $d$ is the feature dimension. Note that the intuition behind $\mathcal{L}_{topo}$ is different from the loss functions in existing graph pooling methods: the coarsened graph after pooling should be topologically similar to the original graph rather than having exact cluster structures. 





\subsection{Analysis}
\label{sec: analysis}
In this section, we examine the validity of our proposed method, and in particular, analyze its expressive power and complexity.

\begin{restatable}{theorem}{theoremPH}
 The self-loop augmented 1-dimensional topological features computed by PH is sufficient enough to be at least as expressive as 1-WL in terms of distinguishing non-isomorphic graphs with self-loops, i.e. if the 1-WL label sequences for two graphs $\mathcal{G}$ and $\mathcal{G}'$ diverge, there exists an injective ﬁltration $f$ such that the corresponding 1-dimensional persistence diagrams $\mathcal{\tilde{D}}_1$  and $\mathcal{\tilde{D}}'_1$ are not equal.
\end{restatable}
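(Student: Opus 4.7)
The plan is to reduce the statement to the known fact that $0$-dimensional PH with an injective filtration is at least as expressive as $1$-WL~\cite{horn2021topological}, by showing that the self-loop augmentation faithfully embeds all the $0$-dimensional discriminative information into the diagonal of $\tilde{\mathcal{D}}_1$.

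First, I would recall the construction of a WL-compatible injective filtration. Given two graphs $\mathcal{G}$ and $\mathcal{G}'$ whose $1$-WL label sequences diverge, there is a round $T$ at which the stable multisets of labels over the two vertex sets differ. Fix an injective map $\psi$ from the (finite) set of realized WL labels into $\mathbb{R}$, and let the MLP-parameterized filtration $\Phi$ realize the vertex function $f(v) = \psi(c_T(v))$, where $c_T(v)$ denotes the $1$-WL color of $v$ at round $T$. Since $\psi$ is injective and colors are finite, $f$ is an injective filtration whose multiset of vertex values is in bijection with the multiset of stable WL colors; this is exactly the filtration used in the $0$-dimensional case.

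Next, I would analyze the effect of the self-loop augmentation on the $1$-dimensional diagram. Following the convention described in Section~\ref{TIP_method}, every vertex $v$ carries a self-loop that enters the filtration at value $f(v)$ and contributes exactly one tuple $(f(v), f(v))$ to $\tilde{\mathcal{D}}_1$ along the diagonal. Off-diagonal points of $\tilde{\mathcal{D}}_1$, which correspond to genuine cycles born and destroyed at distinct filtration values, do not collide with the diagonal because of injectivity of $f$. Consequently, the multiset of diagonal points of $\tilde{\mathcal{D}}_1$ is in bijection with $\{f(v) : v \in V\}$, and hence with $\{c_T(v) : v \in V\}$. If $\tilde{\mathcal{D}}_1 = \tilde{\mathcal{D}}'_1$, comparing diagonals forces $\{c_T(v) : v \in V\} = \{c_T(v') : v' \in V'\}$, contradicting the divergence of the WL sequences. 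Therefore $\tilde{\mathcal{D}}_1 \neq \tilde{\mathcal{D}}'_1$, and the constructed $f$ witnesses the claim.

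The main obstacle I anticipate is a clean justification that each vertex self-loop deterministically contributes exactly one tuple $(f(v), f(v))$ to $\tilde{\mathcal{D}}_1$ under the extended simplicial-complex treatment used by $\mathrm{ph}(\cdot)$ in the paper, and that this contribution is not overwritten by pairings coming from genuine higher cycles that happen to involve $v$. Settling this requires carefully distinguishing the essential class created at the moment a self-loop is added from classes spawned by later edge insertions, and checking that the standard elder-rule pairing used to form the persistence diagram is compatible with the authors' diagonal-placement convention for self-loop edges.
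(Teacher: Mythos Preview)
Your route is genuinely different from the paper's. The paper does not try to recover the entire WL colour multiset from the diagram. Instead it singles out \emph{one} vertex $u$ in $\mathcal{G}$ (and $u'$ in $\mathcal{G}'$) whose label count is already mismatched at the diverging round, assigns those two vertices the \emph{extremal} values $n+n'+1$ and $n+n'+2$, and then argues by a three-way case split (both $u,u'$ on a cycle; neither on a cycle; exactly one on a cycle) that these unique extremal values must surface at different places in $\tilde{\mathcal{D}}_1$ and $\tilde{\mathcal{D}}'_1$. Your argument is more conceptual---you embed the whole $0$-dimensional information wholesale into the diagonal via the self-loops---whereas the paper's is more local, tracking only the single distinguishing vertex. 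The paper's approach buys robustness to exactly the bookkeeping issue you flag at the end; your approach buys a direct reduction to the known $0$-dimensional result without any case analysis.

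There is, however, a real gap in your reduction as written. Your key step is the claim that ``the multiset of diagonal points of $\tilde{\mathcal{D}}_1$ is in bijection with $\{f(v):v\in V\}$.'' Under the paper's full convention for $\tilde{\mathcal{D}}_1$ (see Appendix~\ref{Implementation detail}), this is false: in addition to the self-loop tuples $(f(v),f(v))$, every ordinary edge $e$ that does \emph{not} create a cycle is also assigned a dummy diagonal tuple $(f(e),f(e))=(\max\{f(v),f(w)\},\max\{f(v),f(w)\})$. So the diagonal multiset mixes self-loop contributions with tree-edge contributions, and comparing diagonals no longer forces the vertex colour multisets to agree. Moreover, your filtration $f(v)=\psi(c_T(v))$ is injective on \emph{colours}, not on \emph{vertices}, so you cannot appeal to injectivity to separate the two kinds of diagonal points by value. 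The obstacle you anticipate (self-loop tuples being ``overwritten'' by genuine cycle pairings) is not the issue---self-loops are placed on the diagonal by fiat---the issue is that they are not the \emph{only} things placed there. The paper's extremal-value trick sidesteps this entirely: because $f(u)$ is strictly larger than every other vertex value in $\mathcal{G}$, any point (diagonal or not) with first coordinate $f(u)$ is unambiguously attributable to an edge touching $u$, and the case split handles the rest.
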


 \textbf{Proof Sketch.} We first assume the existence of a sequence of WL labels and show how to construct a filtration function $f$ from this. Consider nodes $u$ and $u'$ are nodes with unique label count in $\mathcal{G}$ and $\mathcal{G'}$, then our filtration is constructed such that their filtration values $f(u)$ and $f(u')$ are unique and different. Consider all three cases: (1) $u$ and $u'$ are both in cycles; (2) $u$ and $u'$ are both not in cycles; (3) one of $u$ and $u'$ is in cycles and the other is not. For all the cases, $f(u)$ and $f(u')$ will be revealed in their respective persistence diagrams. Since $f(u)$ and $f(u')$ are unique and different, we can use the augmented persistence diagrams to distinguish the two graphs.

This result demonstrates that the self-loop augmented 1-dimensional topological features contain sufficient information to potentially perform at least as well as 1-WL when it comes to distinguishing non-isomorphic graphs. We can then obtain the concluding remark that TIP is more expressive than other dense pooling methods by showing that there are pairs of graphs that cannot be distinguished by 1-WL but can be distinguished by TIP. Besides, our proposed simple yet effective self-loop augmentation eliminates the necessity of computing 0-dimensional topological features, thus reducing computational burdens.

\begin{restatable}{proposition}{propositioniso}
\textit{TIP is invariant under isomorphism}.
\end{restatable}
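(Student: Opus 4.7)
The plan is to establish invariance by tracing a permutation $\mathbf{P}$ applied to the input graph through each ingredient of a single TIP layer, then chaining the argument across all $L$ layers. Throughout, ``invariant under isomorphism'' means that the coarsened graph, the induced PDs, and the topological loss depend only on the isomorphism class of the input $(\mathbf{A}^{(0)}, \mathbf{X}^{(0)})$.

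First I would recall the standard fact that a message-passing GNN of the form in Eq.~\ref{mpnn} is permutation equivariant: $\mathrm{GNN}(\mathbf{P}\mathbf{A}\mathbf{P}^{\top}, \mathbf{P}\mathbf{X}) = \mathbf{P}\,\mathrm{GNN}(\mathbf{A}, \mathbf{X})$. Since softmax acts row-wise, the assignment matrix from Eq.~\ref{S=} transforms as $\mathbf{S}^{(l)} \mapsto \mathbf{P}\mathbf{S}^{(l)}$. The key algebraic step is then that the coarsened adjacency in Eq.~\ref{SAS} is \emph{invariant} (not merely equivariant): $(\mathbf{P}\mathbf{S}^{(l)})^{\top}(\mathbf{P}\mathbf{A}^{(l-1)}\mathbf{P}^{\top})(\mathbf{P}\mathbf{S}^{(l)}) = \mathbf{S}^{(l)\top}\mathbf{A}^{(l-1)}\mathbf{S}^{(l)}$, because $\mathbf{P}^{\top}\mathbf{P} = \mathbf{I}$. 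The same cancellation shows that $\mathbf{X}^{(l)}$ from Eq.~\ref{X} is invariant. Hence after one pooling step, the coarsened graph lives in a canonical cluster indexing that no longer depends on the input ordering.

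Once I have invariance of $(\mathbf{A}^{(l)}, \mathbf{X}^{(l)})$, the remaining modules take it as input and therefore inherit invariance. The normalization and entry-wise resampling in Eq.~\ref{normalize} act on $\mathbf{A}^{(l)}$ identically regardless of the original permutation; the filtration $\mathrm{sigmoid}(\Phi(\mathbf{X}^{(l)}))$ is a node-wise MLP on an already-invariant feature matrix, so the filtration values are invariant. Persistent homology itself is a classical isomorphism invariant of filtered simplicial complexes, so $\tilde{\mathcal{D}}_1$ is unchanged; the self-loop augmentation merely appends diagonal tuples in a canonical way and preserves this property. The Hadamard product defining persistence injection then combines two invariant quantities, and the vectorization/statistics in Eq.~\ref{vector} are permutation-invariant aggregates over the multiset of tuples in $\tilde{\mathcal{D}}_1$, so $\mu^{(l)}$ and $\sigma^{(l)}$, and consequently $\mathcal{L}_{topo}$, depend only on the isomorphism class. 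Induction over $l = 1, \dots, L$ completes the argument.

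The main obstacle is handling the Gumbel-softmax resample, which is stochastic: strictly speaking, $\mathbf{A}^{\prime(l)}$ is a random variable, so the correct statement is invariance \emph{in distribution}. I would formalize this by noting that the sampling noise is drawn i.i.d.\ per matrix entry independently of node labels, so for any fixed coupling of noise to entries (indexed by the canonical cluster pairs), the resulting law of $\mathbf{A}^{\prime(l)}$ is identical for isomorphic inputs; all downstream quantities are measurable functions of invariant inputs and i.i.d.\ noise, hence invariant in distribution. A minor secondary subtlety is that $\mathbf{S}^{(l)}$'s column ordering is determined by the fixed GNN parameters rather than by the input, so the canonical cluster indexing is well-defined; this is a property of the architecture rather than of the graph, and does not break invariance.
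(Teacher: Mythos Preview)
Your proposal is correct and follows essentially the same approach as the paper: the core step in both is the cancellation $\mathbf{P}^{\top}\mathbf{P}=\mathbf{I}$ showing that $\mathbf{S}^{\top}\mathbf{A}\mathbf{S}$ and $\mathbf{S}^{\top}\mathbf{X}$ are unchanged when $\mathbf{S}\mapsto\mathbf{P}\mathbf{S}$, $\mathbf{A}\mapsto\mathbf{P}\mathbf{A}\mathbf{P}^{\top}$, $\mathbf{X}\mapsto\mathbf{P}\mathbf{X}$, together with the isomorphism invariance of PH under an equivariant filtration. Your treatment is in fact more careful than the paper's in two respects---you explicitly address the Gumbel-softmax randomness (arguing invariance in distribution) and induct over the $L$ layers---both of which the paper leaves implicit.
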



Detailed proof and illustrations of the theorem and proposition can be found in Appendix \ref{proof}. 

\textbf{Complexity.} PH can be efficiently computed for dimensions 0 and 1, with a worst-case time complexity of $O(m\alpha(m))$, where $m$ represents the number of sorted edges in a graph. Here, $\alpha(\cdot)$ represents the inverse Ackermann function, which is extremely slow-growing and can essentially be considered as a constant for practical purposes. Therefore, the primary factor that affects the calculation of PH is the complexity of sorting all the edges, which is $O(m\log m)$. Our resampling and persistence injection mechanism ensures that the coarsened graphs are sparse rather than dense, making our approach both efficient and scalable. We provide running time comparisons in Appendix \ref{sec:runtime}, which indicates that the inclusion of TIP does not impose a significant computational burden.

\section{Experiments}
In the experiments, we evaluate the benefits of persistent homology on several state-of-the-art graph pooling methods, with the goal of answering the following questions:

\textbf{Q1.} Is PH capable of preserving topological information during pooling? 

\textbf{Q2.} How does PH affect graph pooling in preserving task-specific information?

To this end, we showcase the empirical performance of TIP on two tasks, namely, topological similarity (Section \ref{sec:topo_sim}) and graph classification (\ref{sec:graph_calssification}). Our primary focus is to assess in which scenarios topology can enhance GP.  

\subsection{Experimental Setup}
\label{Sec: experimental setup}
\paragraph{Models.} To investigate the effectiveness of PH in GP, we integrate TIP with DiffPool, MinCutPool, and DMoNPool, which are the pioneering approaches that have inspired many other pooling methods. Additionally, as most pooling methods rely on GNNs as their backbone, we compare the widely used GNN models GCN~\cite{kipf2016semi}, GIN~\cite{xu2018powerful}, and GraphSAGE~\cite{hamilton2017inductive}. We also look into another two related and  State-of-the-Art GNN models, namely TOGL~\cite{horn2021topological} and GSN~\cite{bouritsas2022improving}, which incorporate topological information and graph substructures into GNNs to enhance the expressive power. Several other GP methods, namely Graclus~\cite{dhillon2007weighted} and TopK~\cite{gao2019graph} are also compared. For model selection, we follow the guidelines provided by the original authors or benchmarking papers. Our method acts as an additional plug-in to existing pooling methods (referred to as -TIP) without modifying the remaining model structure and hyperparameters. Appendix \ref{Implementation detail} provides detailed configurations of these models.

\paragraph{Datasets.} To evaluate the capabilities of our model across diverse domains, we assess its performance on a variety of graph datasets commonly used in graph related tasks. We select several benchmarks from TU datasets~\cite{morris2020tudataset}, OGB datasets~\cite{hu2020open} and ZINC dataset~\cite{sterling2015zinc}. Specifically, we adopt molecular datasets NCI1, NCI109, and OGBG-MOLHIV, bioinformatics datasets ENZYMES, PROTEINS, and DD, as well as social network datasets IMDB-BINARY and IMDB-MULTI.  Furthermore, to investigate the topology-preserving ability of our method, we conduct experiments on several highly structured datasets (ring, torus, grid2d) obtained from the PyGSP library. Appendix \ref{sec: dataset} provides detailed statistics of the datasets. 

\paragraph{Evaluation.} In the graph classification task, all datasets are splitted into train (80\%), validation (10\%), and test (10\%) data.  Following the evaluation protocol in \cite{ying2018hierarchical, mesquita2020rethinking}, we train all models using the Adam optimizer~\cite{kingma2014adam} and  implement a learning rate decay mechanism, reducing the learning rate from $10^{-3}$ to $10^{-5}$ with a decay ratio of 0.5 and a patience of 10 epochs. Additionally, we use early stopping based on the validation accuracy with patience of 50 epochs. We report statistics of the performance metrics over 20 runs with different seeds.


\subsection{Preserving Topological Structure}
\label{sec:topo_sim}
In this experiment, we study \textbf{Q1} about the ability of PH to preserve topological structure during pooling. Specifically, we assess the topological similarity between the original and coarsened graphs $\mathcal{G}$ and $\mathcal{G'}$, by comparing the Wasserstein distance associated with their respective PDs $\mathcal{\tilde{D}}_1$ and $\mathcal{\tilde{D}'}_1$. This evaluation criterion is widely used to compare the topological similarity of graphs~\cite{yan2022neural, southern2023curvature}. We utilize Forman curvature on each edge of the graph as the filtration, which incorporates edge weights and graph clusters to better capture the topological features of the coarsened graphs~\cite{sreejith2016forman, wee2021forman}. We consider the 1-Wasserstein distance $W\left(\mathcal{\tilde{D}}_1, \mathcal{\tilde{D}}'_1\right)=\inf _{\delta \in \Pi\left(\mathcal{\tilde{D}}_1, \mathcal{\tilde{D}}'_1\right)} \mathbb{E}_{(x, y) \sim \delta}[\|x-y\|]$ as the evaluation metric, where $\Pi(\cdot)$ is the set of joint distributions $\delta(x, y)$ whose marginals are $\mathcal{\tilde{D}}_1$ and $\mathcal{\tilde{D}}'_1$, respectively. Note that we are not learning a new ﬁltration but keep a ﬁxed one. Rather, we use learnable filtrations in training to enhance flexibility, and solely optimize $L_{topo}$ as the main objective.
\begin{table}[tb]
    \begin{center}
    \caption{Results to show the topology-preserving ability. Wasserstein distance ($\downarrow$) is used to assess the topological similarity. A \textbf{bold} value indicates the overall winner.}
    \resizebox{0.8\textwidth}{!}{

    \begin{tabular}{l c cc}
    \toprule
    \multirow{2}{*}{\textbf{Methods}}  &  \multicolumn{3}{c}{\textbf{Datasets}}    \\
\cmidrule{2-4}
         & ring & torus & grid2d  \\
    \midrule
    Graclus & 37.62 $\pm$ 4.41 & 124.47 $\pm$ 12.07 & 35.82 $\pm$ 0.93 \\
    TopK & 14.24 $\pm$ 1.06 & 35.15 $\pm$ 4.78 & 84.12 $\pm$ 2.21 \\
    \midrule
    DiffPool & 234.57 $\pm$ 9.49 & 237.89 $\pm$ 20.66 & 146.91 $\pm$ 6.05 \\
    DiffPool-TIP & \textbf{8.03 $\pm$ 3.08} & 17.97 $\pm$ 2.19 & \textbf{32.26 $\pm$ 3.21}  \\
    \midrule
    MinCutPool &  232.60 $\pm$ 10.81 & 248.51 $\pm$ 15.69 & 155.16 $\pm$ 21.79  \\
    MinCutPool-TIP & 18.11 $\pm$ 5.59 & \textbf{11.38 $\pm$ 2.21} & 58.71 $\pm$ 9.84     \\
    \midrule
    DMoNPool & 224.48 $\pm$ 22.25 & 236.97 $\pm$ 16.54 & 142.85 $\pm$ 27.53 \\
    DMoNPool-TIP & 16.10 $\pm$ 4.80 & 17.34 $\pm$ 4.76 & 52.26 $\pm$ 5.75  \\
    \bottomrule
    \end{tabular}
}
    
    \label{tab:topo_sim}
\end{center}
\end{table}

We compare TIP with other pooling methods. Table \ref{tab:topo_sim} reports the average $W$ values on three datasets, demonstrating that TIP can improve dense pooling methods to a large margin and have the best topological similarity. We visualize the pooling results in Fig. \ref{vis_pygsp} for better interpretation, where isolated nodes with no links are omitted for clarity. It is evident that DiffPool, MinCutPool, and DMoNPool tend to generate dense graphs and fail to preserve any topological structures. Conversely, our method, which incorporates topological features using PH, sparsifies the coarsened graphs and reveals certain essential topological structures. Notably, in the ring and torus datasets, large cycles are clearly preserved by our method. Besides, the grid2d dataset, despite having a different spatial layout, exhibits similar topology to torus (with four adjacent nodes forming a small cycle), resulting in similar shapes of their corresponding coarsened graphs. This indicates that the objective function indeed contributes to preserving topological similarity to some extent. Sparse pooling methods, which tend to preserve local topology, perform slightly better than the original dense pooling methods. 






\begin{figure*}[tb]\large
    \centering
    \resizebox{\textwidth}{!}
    {
    \begin{tabular}{cccccccccc}
     & Original & DiffPool & DiffPool-TIP & MinCutPool & MinCutPool-TIP & DMoNPool & DMoNPool-TIP & TopK & Graclus\\
    \rotatebox{90}{\qquad \qquad  ring} & \includegraphics[width=.25\linewidth]{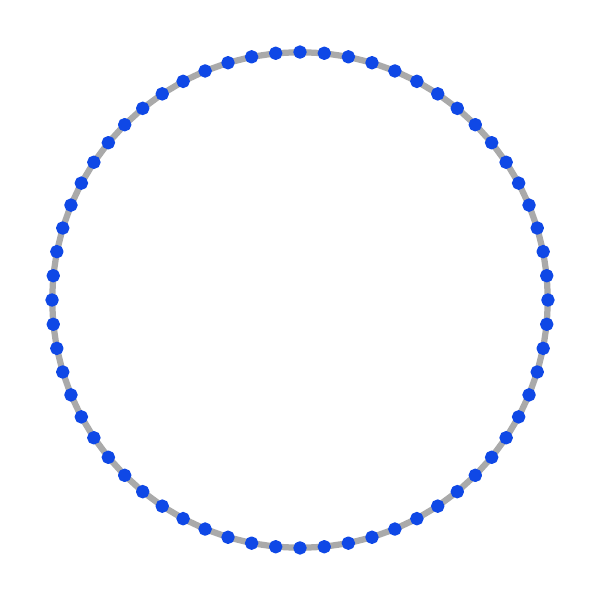} & \includegraphics[width=.25\linewidth]{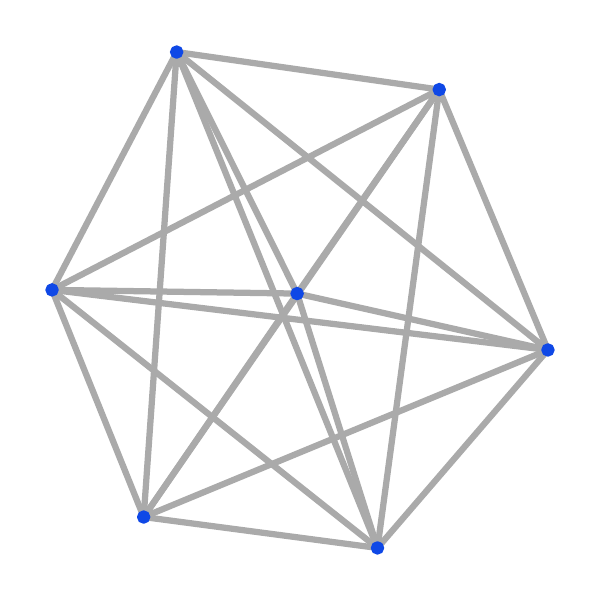} & \includegraphics[width=.25\linewidth]{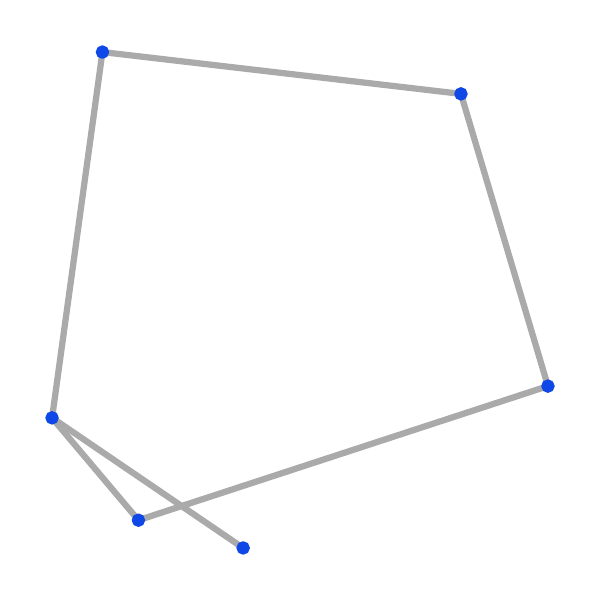} & \includegraphics[width=.25\linewidth]{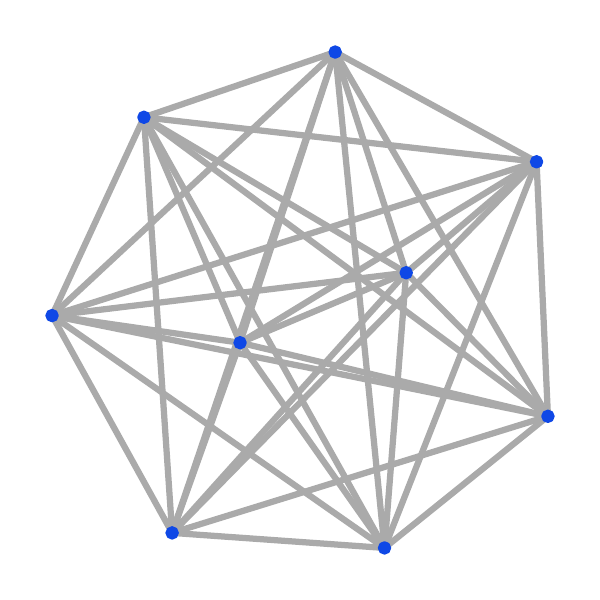}  & \includegraphics[width=.25\linewidth]{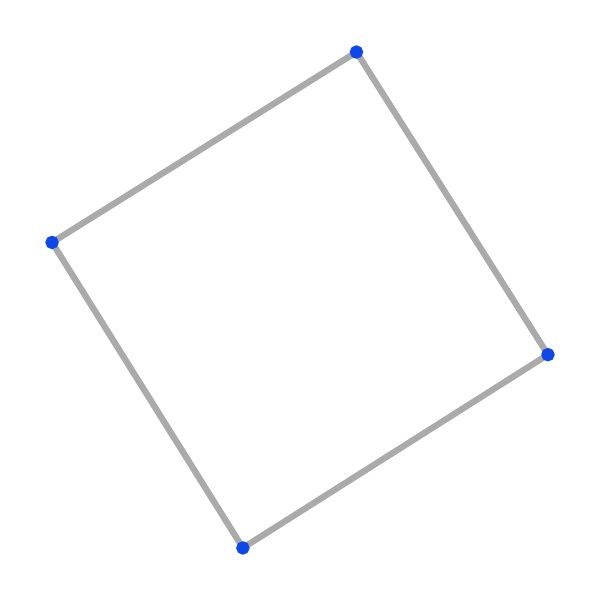} & \includegraphics[width=.25\linewidth]{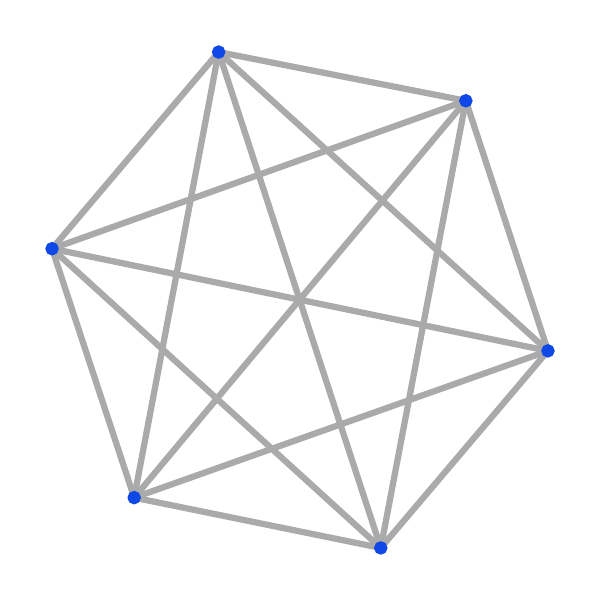} & \includegraphics[width=.25\linewidth]{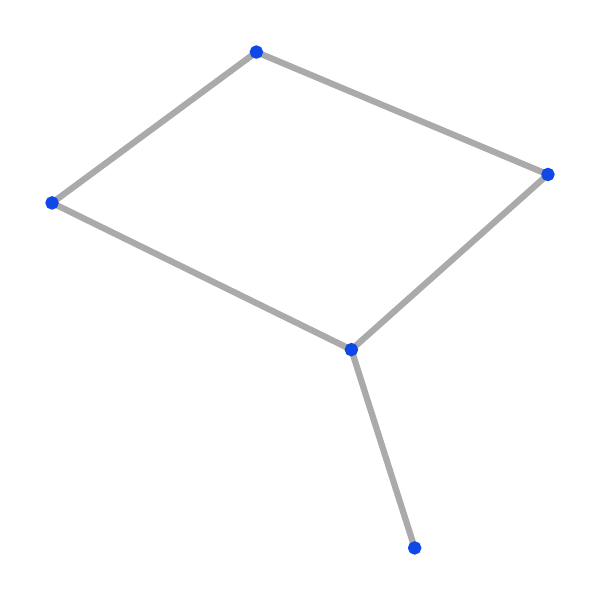} & \includegraphics[width=.25\linewidth]{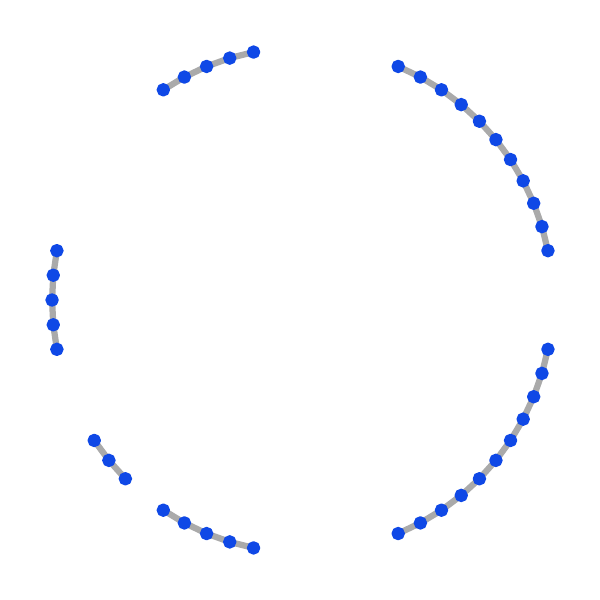} & \includegraphics[width=.25\linewidth]{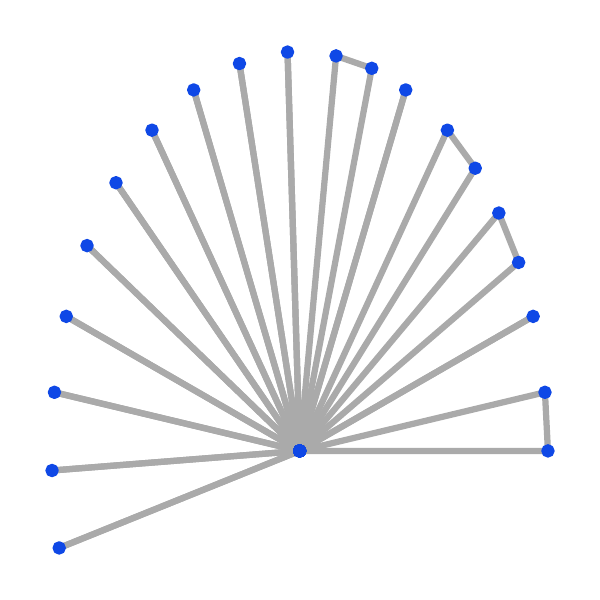}  \\
    \rotatebox{90}{\qquad \qquad torus} & \includegraphics[width=.25\linewidth]{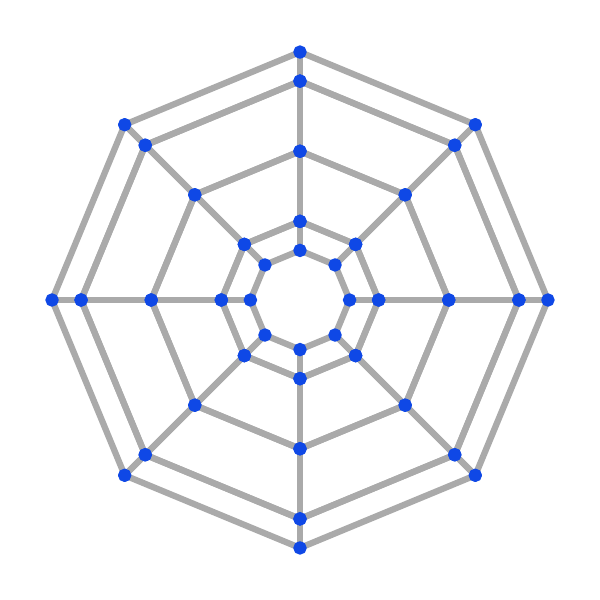} & \includegraphics[width=.25\linewidth]{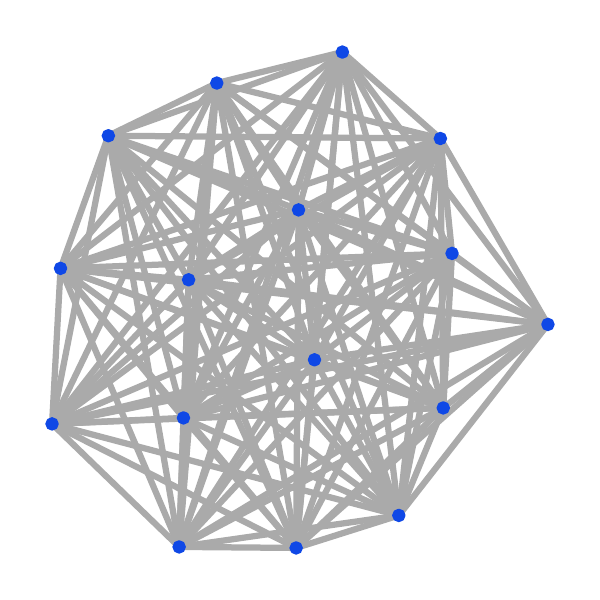} & \includegraphics[width=.25\linewidth]{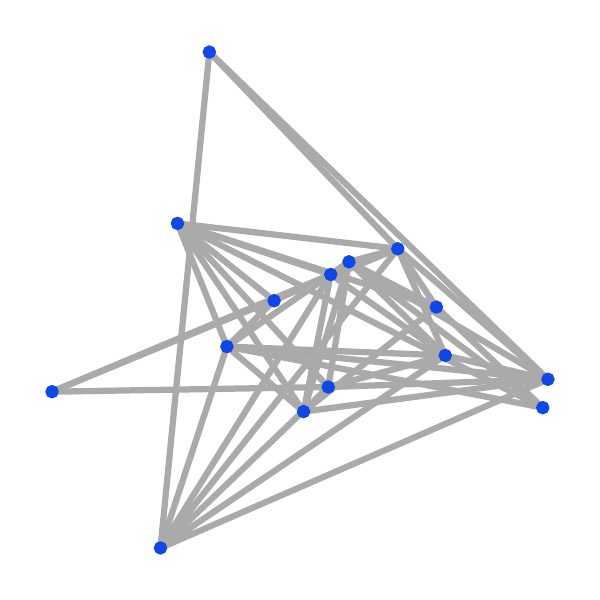} & \includegraphics[width=.25\linewidth]{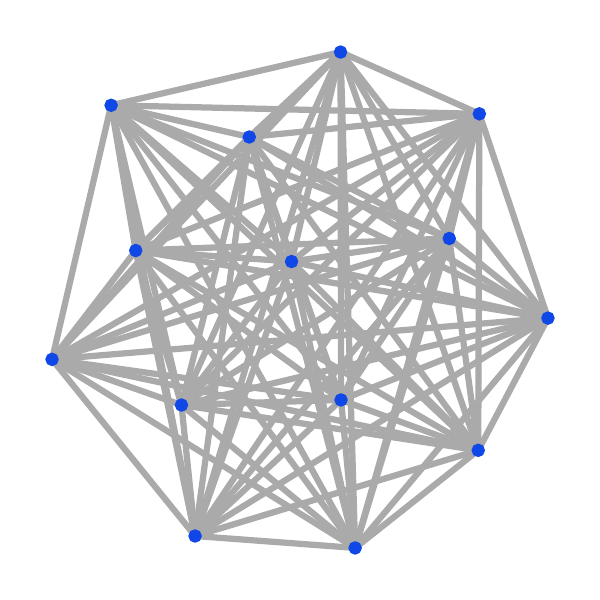} & \includegraphics[width=.25\linewidth]{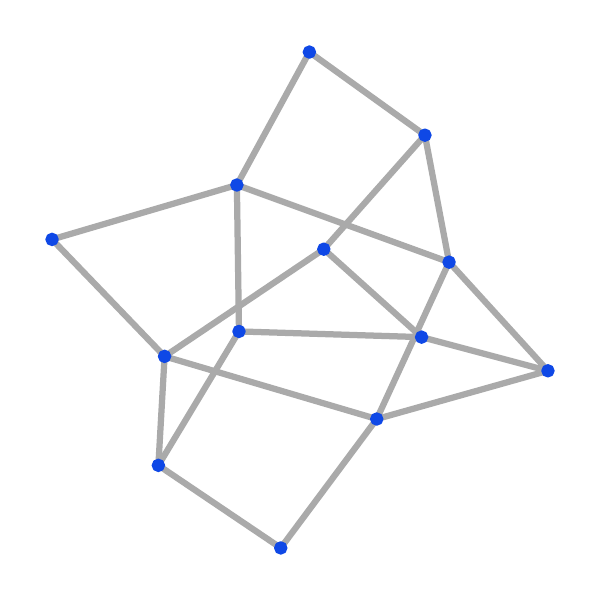} & \includegraphics[width=.25\linewidth]{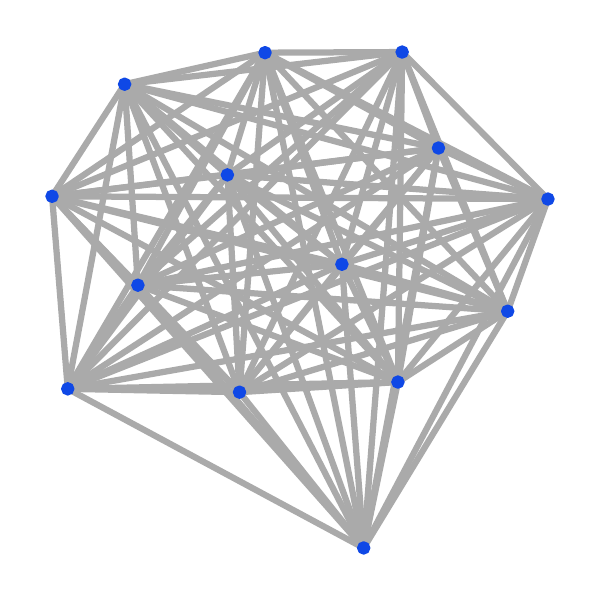} & \includegraphics[width=.25\linewidth]{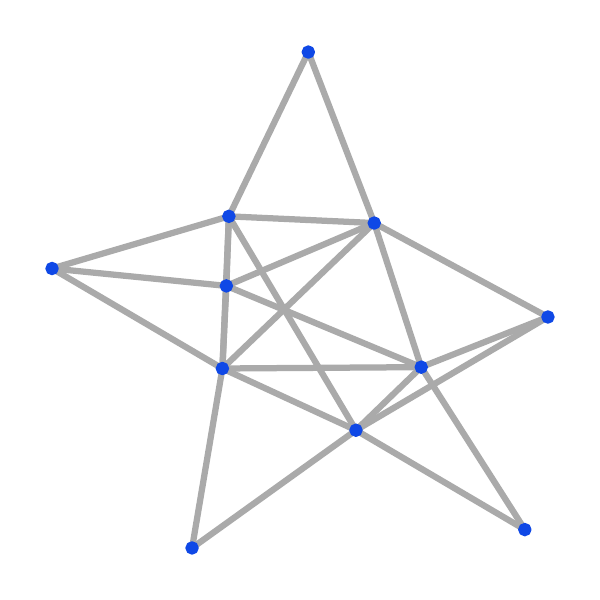} & \includegraphics[width=.25\linewidth]{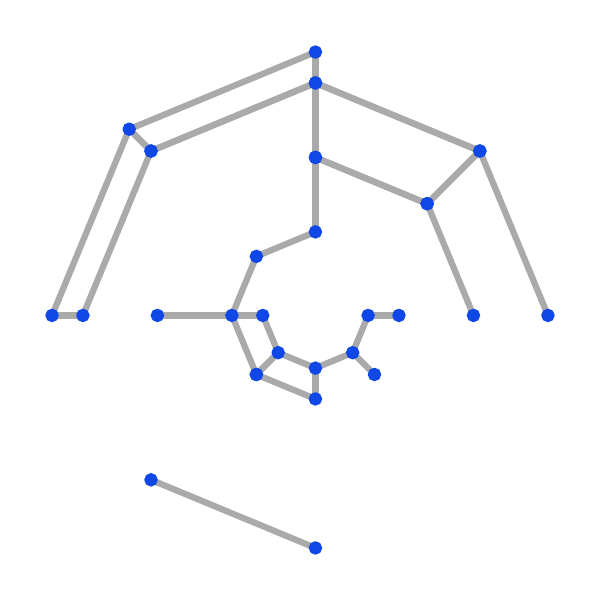} & \includegraphics[width=.25\linewidth]{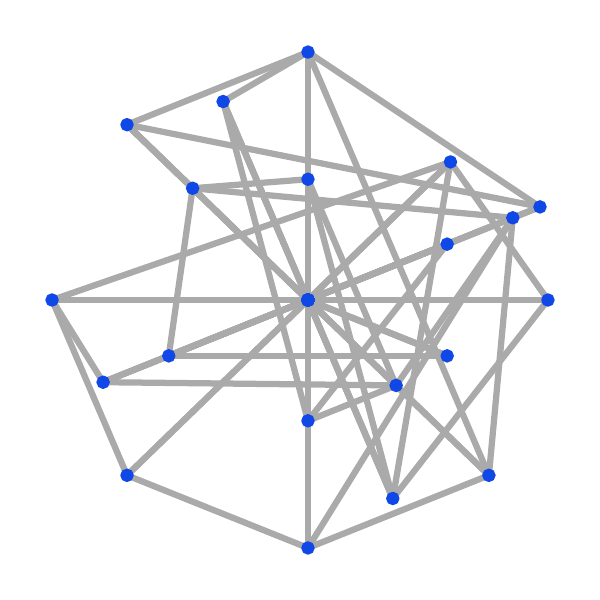}\\
    \rotatebox{90}{\qquad \qquad grid2d} & \includegraphics[width=.25\linewidth]{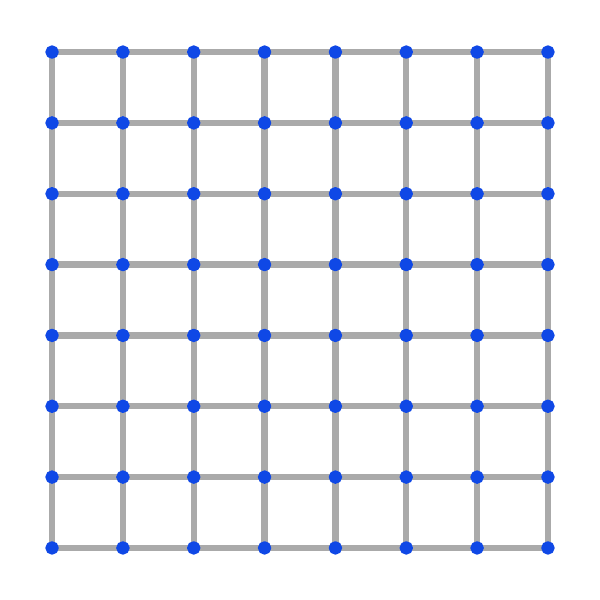} & \includegraphics[width=.25\linewidth]{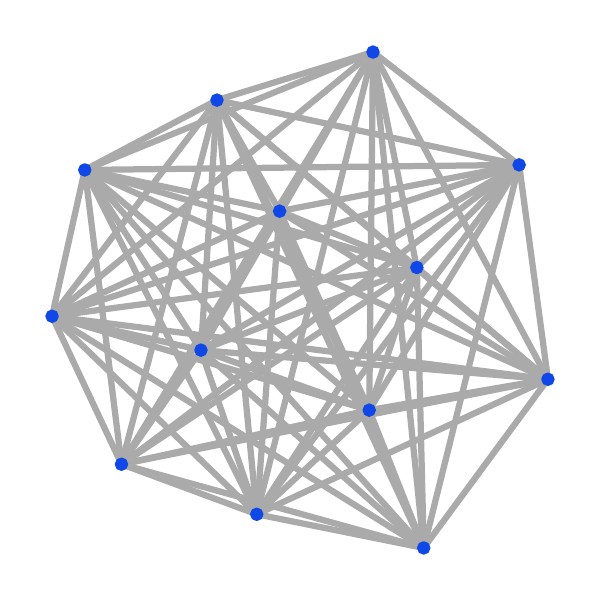} & \includegraphics[width=.25\linewidth]{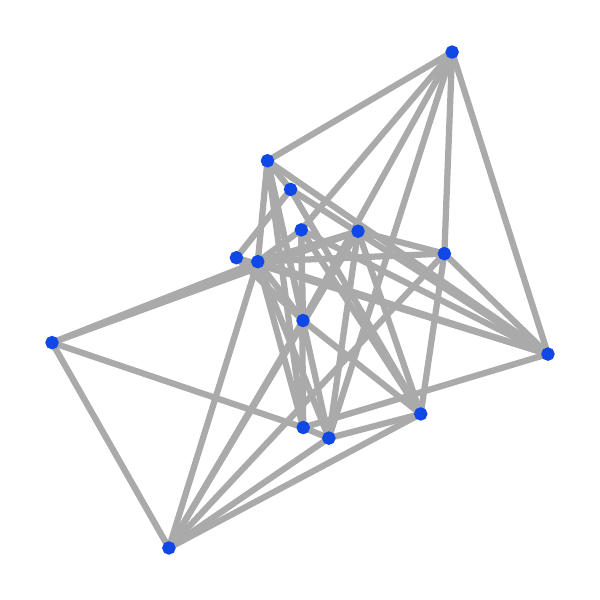} & \includegraphics[width=.25\linewidth]{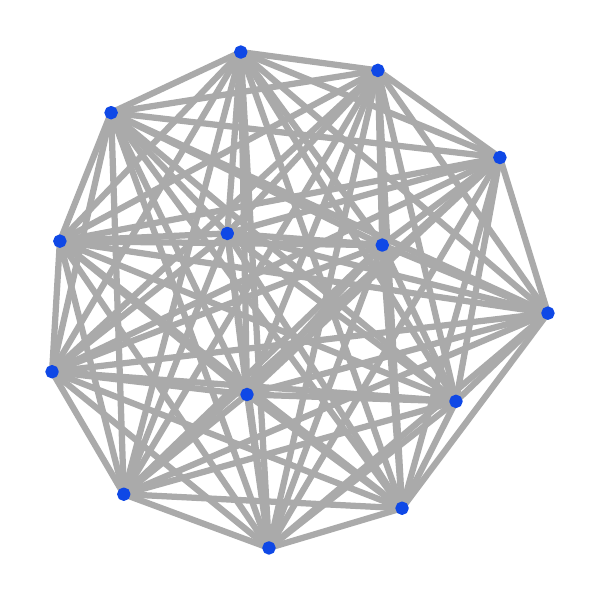} & \includegraphics[width=.25\linewidth]{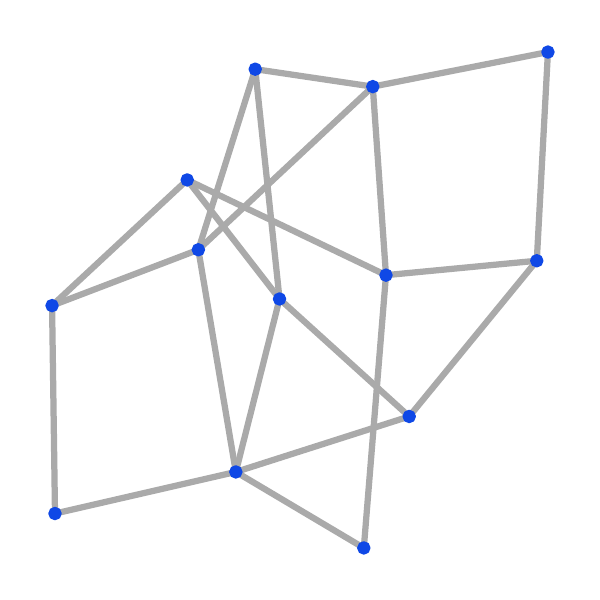} & \includegraphics[width=.25\linewidth]{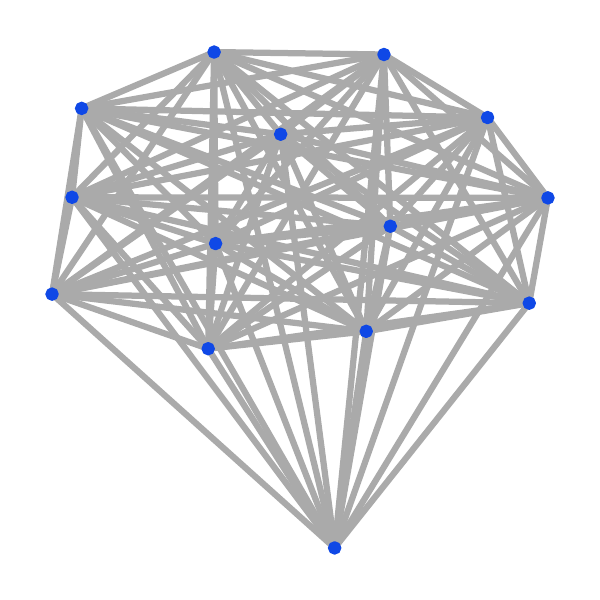} & \includegraphics[width=.25\linewidth]{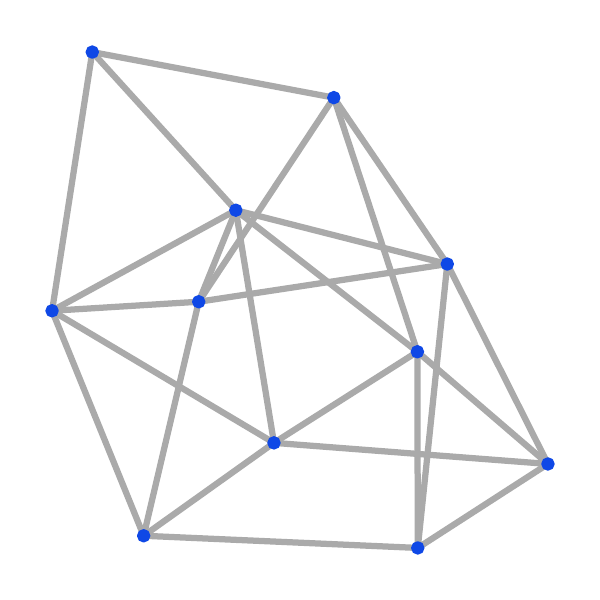} & \includegraphics[width=.25\linewidth]{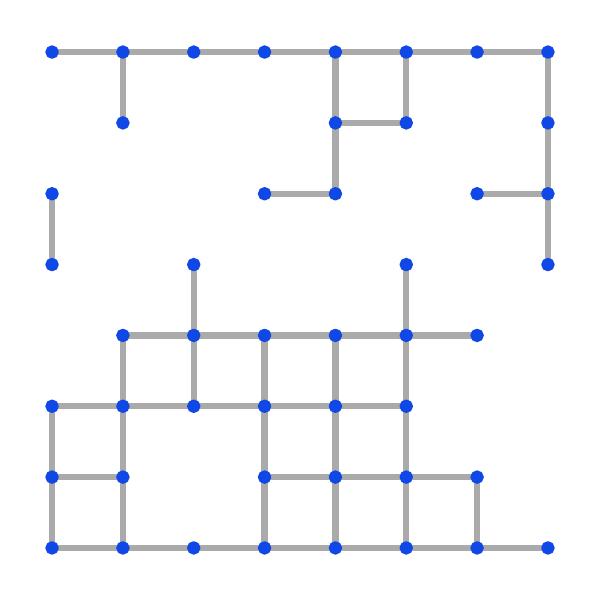} & \includegraphics[width=.25\linewidth]{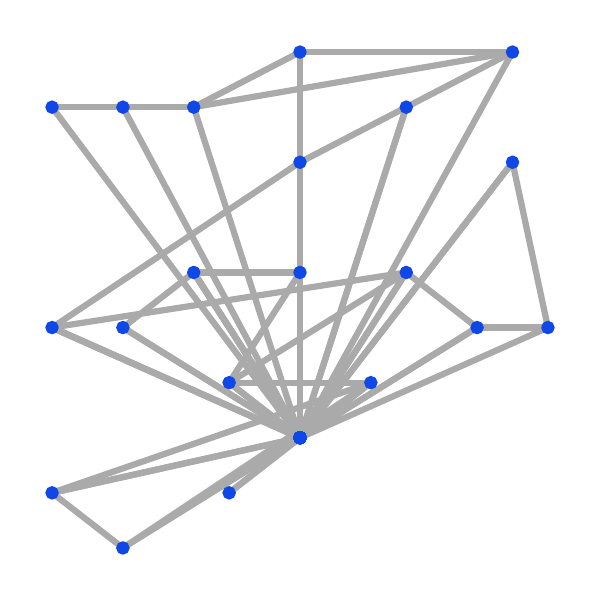}  \\

    \end{tabular}
    }
    \caption{Coarsened graphs from different methods in the preserving topological structure experiment.}
    \label{vis_pygsp}
\end{figure*}
\begin{table*}[tb]
\caption{Test accuracy ($\uparrow$) of graph classification on benchmark datasets. A \textbf{bold} value indicates the overall winner. Gray background indicates that TIP outperforms the base GP.}
\label{GraphClassi}
\centering
\resizebox{\textwidth}{!}
{
\begin{tabular}{lcccccccc}
\toprule
\multirow{2}{*}{\textbf{Methods}}  &  \multicolumn{8}{c}{\textbf{Datasets}}    \\
\cmidrule{2-9}
& NCI1       & NCI109     & ENZYMES    & PROTEINS  & DD & IMDB-BINARY & IMDB-MULTI & OGBG-MOLHIV\\

\midrule
GCN & 77.81 $\pm$ 1.50 & 74.90 $\pm$ 1.85 & 32.51 $\pm$ 3.35 & 76.65 $\pm$ 3.14 & 78.66 $\pm$ 2.36 & 74.20 $\pm$ 2.40 & 53.23 $\pm$ 3.04 & 75.04 $\pm$ 0.84\\
GIN & 80.30 $\pm$ 1.70 & 79.66 $\pm$ 1.55 & 42.83 $\pm$ 3.66 & 77.18 $\pm$ 3.35 & 78.05 $\pm$ 3.60 & 72.65 $\pm$ 3.04 & 53.28 $\pm$ 3.16 & 76.03 $\pm$ 0.84 \\
GraphSAGE & 80.85 $\pm$ 1.25  & 79.16 $\pm$ 1.28 & 39.17 $\pm$ 3.28 & 76.67 $\pm$ 3.05 & 78.83 $\pm$ 3.07 & 76.60 $\pm$ 2.37 & 53.46 $\pm$ 2.39 & 76.18 $\pm$ 1.27 \\
TOGL & 80.53 $\pm$ 2.29 & 78.27 $\pm$ 1.39 & 46.09 $\pm$ 3.72 & 78.17 $\pm$ 2.80 & 76.10 $\pm$ 2.24 & 76.65 $\pm$ 2.75 & 53.87 $\pm$ 2.67 & 77.21 $\pm$ 1.33\\
GSN & 83.50 $\pm$ 2.00 & 79.45 $\pm$ 1.88 & 49.50 $\pm$ 6.54 & 74.59 $\pm$ 5.00 & 73.17 $\pm$ 4.17 & \textbf{76.80 $\pm$ 2.00} & 52.60 $\pm$ 3.60 & 76.06 $\pm$ 1.74\\
Graclus & 80.82 $\pm$ 1.27 & 79.13 $\pm$ 1.79 & 41.44 $\pm$ 3.46 & 75.69 $\pm$ 2.62 & 74.67 $\pm$ 2.45 & 74.45 $\pm$ 3.29 & 54.72 $\pm$ 2.79 & 76.81 $\pm$ 0.70\\
TopK & 79.43 $\pm$ 3.50 & 77.96 $\pm$ 1.58 & 38.35 $\pm$ 4.83 & 76.03 $\pm$ 2.94 & 76.97 $\pm$ 3.94 & 72.60 $\pm$ 4.24 & 53.66 $\pm$ 2.93 & 76.28 $\pm$ 0.67 \\
\midrule
DiffPool & 77.64 $\pm$ 1.86 & 76.50 $\pm$ 2.32 & 48.34 $\pm$ 5.14  & 78.81 $\pm$ 3.12 & 80.27 $\pm$ 2.51 & 73.15 $\pm$ 3.30 & 54.32 $\pm$ 2.99 & 76.60 $\pm$ 1.04\\
 DiffPool-TIP       & \cellcolor{light-gray} \textbf{83.75 $\pm$ 1.31} & \cellcolor{light-gray} \textbf{81.09 $\pm$ 1.65} & \cellcolor{light-gray}  \textbf{65.05 $\pm$ 4.24} & \cellcolor{light-gray} \textbf{79.86 $\pm$ 3.12} & \cellcolor{light-gray} \textbf{82.12 $\pm$ 2.53} & \cellcolor{light-gray} 76.40 $\pm$ 3.13 & \cellcolor{light-gray} \textbf{55.53 $\pm$ 2.92} & \cellcolor{light-gray} \textbf{77.75 $\pm$ 1.18}\\ 
\midrule
MinCutPool & 77.92 $\pm$ 1.67 & 75.88 $\pm$ 2.06 & 39.83 $\pm$ 2.63 & 78.25 $\pm$ 3.84 & 79.15 $\pm$ 3.51 & 73.80 $\pm$ 3.54 & 53.87 $\pm$ 2.95 & 75.60 $\pm$ 0.54\\
 MinCutPool-TIP & \cellcolor{light-gray} 80.17 $\pm$ 1.29 & \cellcolor{light-gray} 79.48 $\pm$ 1.37 & \cellcolor{light-gray} 46.34 $\pm$ 3.85 & \cellcolor{light-gray} 79.73 $\pm$ 3.27 & \cellcolor{light-gray} 80.87 $\pm$ 2.47 & \cellcolor{light-gray} 75.20 $\pm$ 2.67 & \cellcolor{light-gray} 54.47 $\pm$ 2.27 & \cellcolor{light-gray} 77.18 $\pm$ 0.83\\
\midrule
DMoNPool & 78.03 $\pm$ 1.64 & 76.62 $\pm$ 1.94 & 40.82 $\pm$3.68 & 78.63 $\pm$ 3.89 & 79.16 $\pm$ 3.61 & 73.50 $\pm$ 3.01 & 54.07 $\pm$ 3.08 & 76.30 $\pm$ 1.34\\
DMoNPool-TIP & \cellcolor{light-gray} 79.68 $\pm$ 1.38 & \cellcolor{light-gray} 78.46 $\pm$ 1.50 & \cellcolor{light-gray} 45.84 $\pm$ 5.32 & \cellcolor{light-gray} 79.73 $\pm$ 3.66 & \cellcolor{light-gray} 81.46 $\pm$ 2.96 & \cellcolor{light-gray} 74.25 $\pm$ 2.93 & \cellcolor{light-gray} 54.23 $\pm$ 2.64 & \cellcolor{light-gray} 76.70 $\pm$ 0.62\\

\bottomrule
\end{tabular}
}
\end{table*}

\subsection{Preserving Task-Speciﬁc Information}
\label{sec:graph_calssification}
In this experiment, we examine the impact of PH on GP in downstream tasks to answer \textbf{Q2}. We have observed in the former experiment that PH can preserve essential topological information during pooling. However, two additional concerns arise: (1) Does TIP continue to generate invariant sub-topology in the downstream task? (2) If so, does this sub-topology contribute to the performance of the downstream task? To address these concerns, we evaluate TIP using various graph classification benchmarks, where the accuracy achieved on these benchmarks serves as a measure of a method's ability to selectively preserve crucial information based on the task at hand.

We begin by visualizing the coarsened graphs in this task, where edges are cut-off by a small value. From Fig. \ref{vis_graph_classi}, we can clearly observe that our method manage to preserve the essential sub-topology similar to the original graphs, while dense pooling methods cannot preserve any topology. As discussed in~\cite{mesquita2020rethinking}, dense pooling methods achieve comparable performance when the assignment matrix $\mathbf{S}$ is replaced by a random matrix. Here our visualization reveals that regardless of the value of $\mathbf{S}$, the coarsened graph always approaches to a fully connected one. Sparse pooling methods, on the other hand, manage to preserve some local structures through clustering or dropping, but the essential global topological structures are destroyed. 

Table \ref{GraphClassi} presents the average and standard deviation of the graph classification accuracy on benchmark datasets, where the results of GP and  several baseline GNNs are provided.  Experimental results demonstrate that TIP can consistently enhance the performance of the three dense pooling methods. While the original dense pooling methods sometimes underperform compared to the baselines, they are able to surpass them after integrating TIP. 

\begin{figure*}[tb]\large
    \centering
    \resizebox{\textwidth}{!}
    {
    \begin{tabular}{cccccccccc}
     & Original & DiffPool & DiffPool-TIP & MinCutPool & MinCutPool-TIP & DMoNPool & DMoNPool-TIP & TopK & Graclus\\
    \rotatebox{90}{\qquad \qquad ENZYMES} & \includegraphics[width=.25\linewidth]{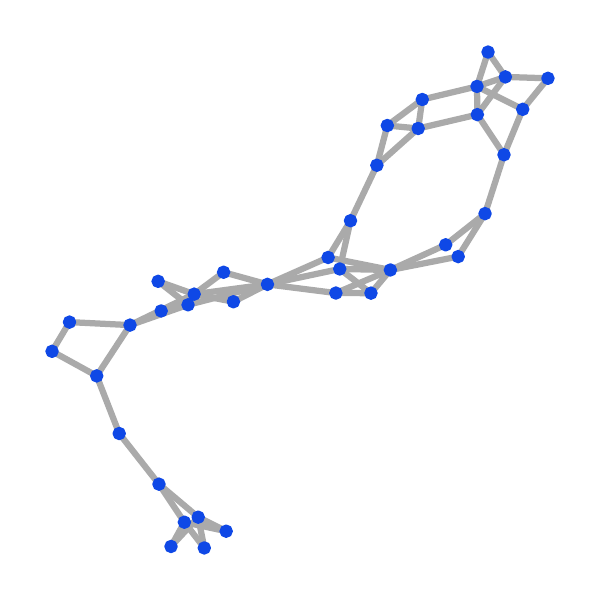} &  \includegraphics[width=.25\linewidth]{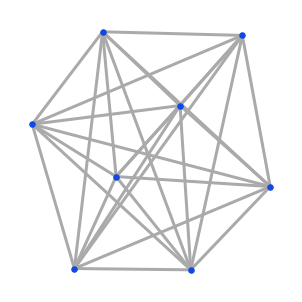}&\includegraphics[width=.25\linewidth]{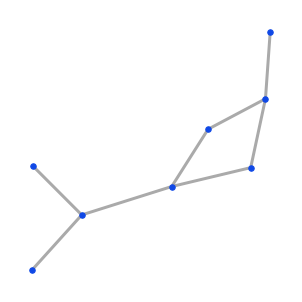} & \includegraphics[width=.25\linewidth]{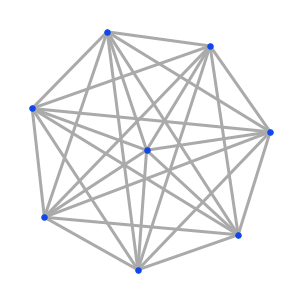} & \includegraphics[width=.25\linewidth]{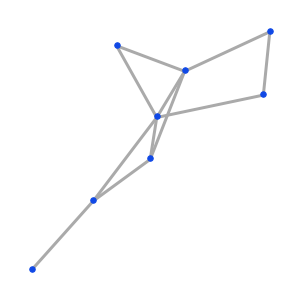}  & \includegraphics[width=.25\linewidth]{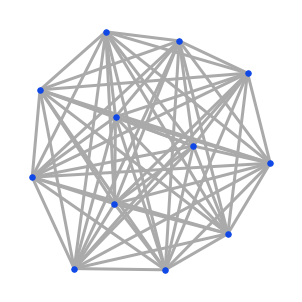} & \includegraphics[width=.25\linewidth]{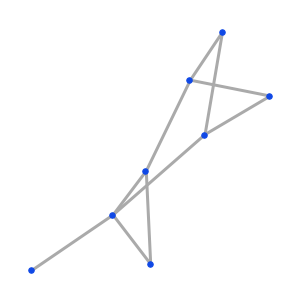} & \includegraphics[width=.25\linewidth]{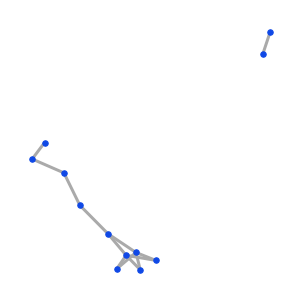} & \includegraphics[width=.25\linewidth]{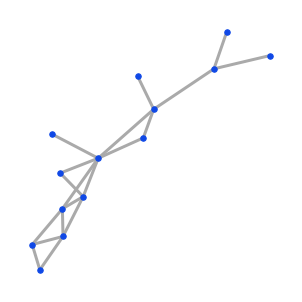}\\

    \rotatebox{90}{\qquad \qquad NCI1} & \includegraphics[width=.25\linewidth]{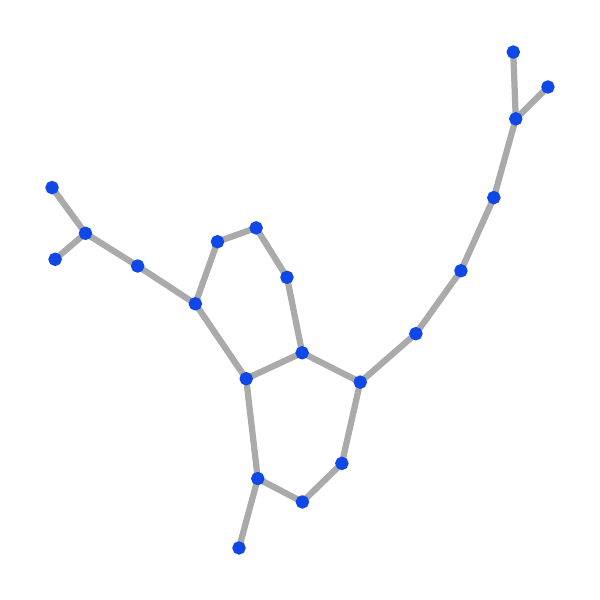} & \includegraphics[width=.25\linewidth]{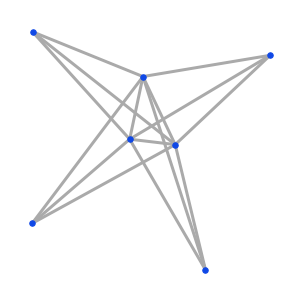} & \includegraphics[width=.25\linewidth]{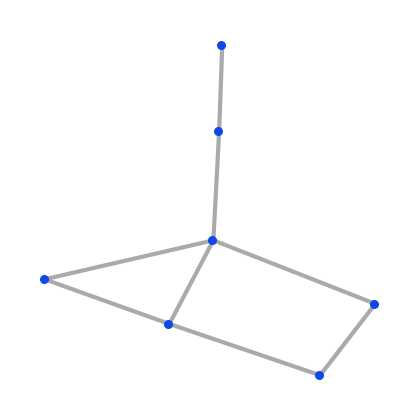} & \includegraphics[width=.25\linewidth]{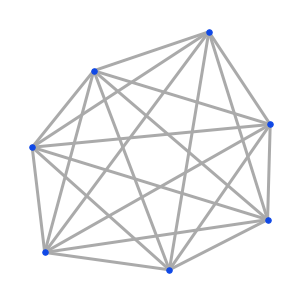} & \includegraphics[width=.25\linewidth]{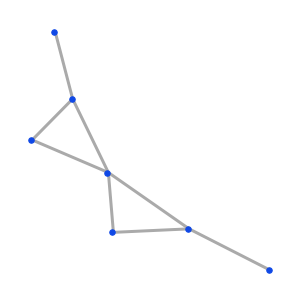} & \includegraphics[width=.25\linewidth]{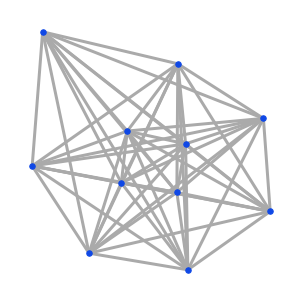} & \includegraphics[width=.25\linewidth]{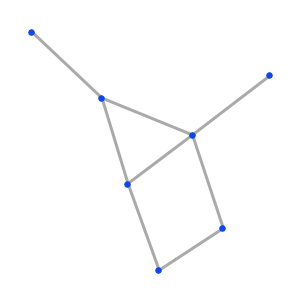} & \includegraphics[width=.25\linewidth]{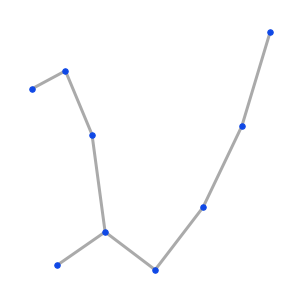} & \includegraphics[width=.25\linewidth]{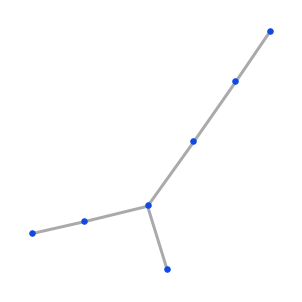}\\

    \rotatebox{90}{\qquad \qquad PROTEINS} & \includegraphics[width=.25\linewidth]{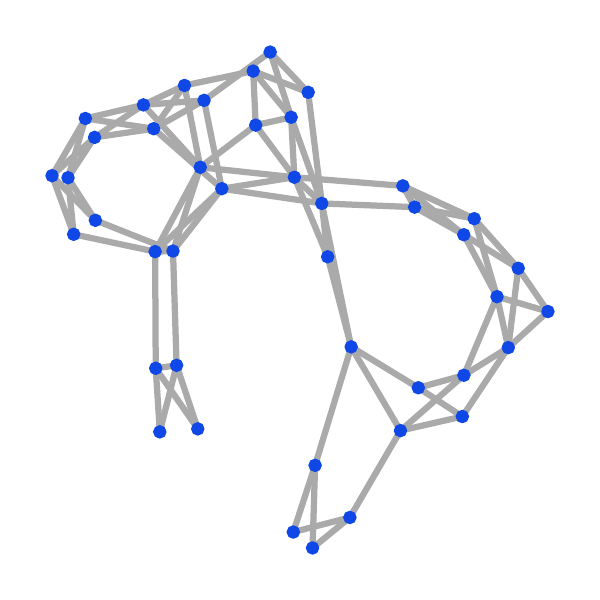} & \includegraphics[width=.25\linewidth]{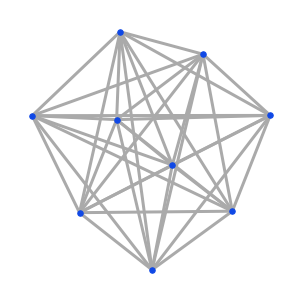} & \includegraphics[width=.25\linewidth]{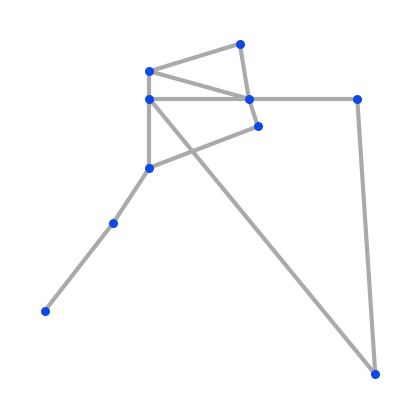} & \includegraphics[width=.25\linewidth]{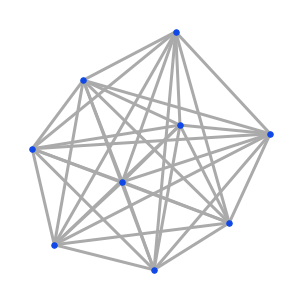} & \includegraphics[width=.25\linewidth]{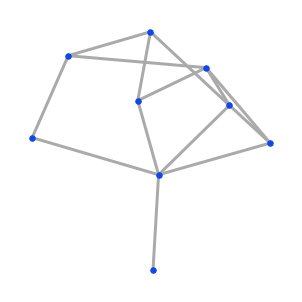} & \includegraphics[width=.25\linewidth]{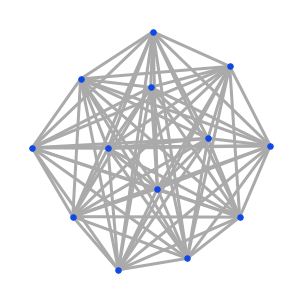} & \includegraphics[width=.25\linewidth]{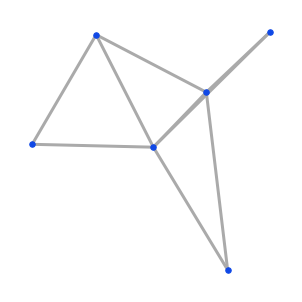} & \includegraphics[width=.25\linewidth]{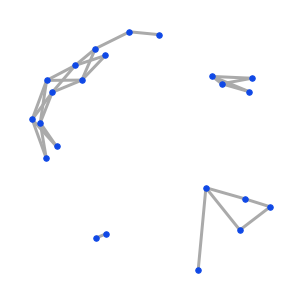} & \includegraphics[width=.25\linewidth]{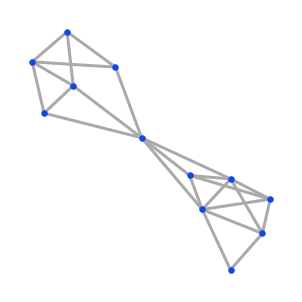}\\

    \label{fig:enter-label}
    \end{tabular}
    }
    \caption{Graphs pooled with different methods in graph classification experiment.}
    \label{vis_graph_classi}
\end{figure*}

Moreover, an intriguing observation can be found on ENZYMES dataset, where TOGL surpasses the baseline GNNs.  TOGL in practice, incorporates PH into GNNs (GraphSAGE in our implementation), so this results underscores the significance of incorporating topological information for improved performance on ENZYMES. Further, our method demonstrates more significant improvements by augmenting the three dense pooling methods on the ENZYMES dataset. One possible explanation for the observed phenomenon is that the coarsened graphs generated by our methods bear a striking resemblance to numerous frequent subgraphs present in this dataset ~\cite{fu2023desco}. Such substructures may serve as indicators of unique characteristics within the graph, rendering them valuable for subsequent tasks. However, it is also worth noting that TOGL only exhibits marginal improvements or even underperforms on the other datasets. This suggests that simply integrating PH features into GNN layers does not fully exploit topological information. Conversely, injecting global topological invariance into pooling layers in our method yields superior performance.

To demonstrate the effectiveness of preserving the invariant sub-topology, we compared DiffPool-TIP with its variant counterpart, DiffPool-TIP-NL (\underline{n}o topological \underline{l}oss), by replacing $\mathcal{L}_{topo}$ with the original $\mathcal{L}_{r}$ in DiffPool (see Table \ref{tab:loss} in Appendix \ref{appendix_graph_pooling}).  The training objective curve and the Wasserstein distance curve are presented in Figure \ref{train_curve}, both based on the ENZYMES dataset and a fixed filtration (the same as in Section \ref{sec:topo_sim}).  From the figures, it is evident that the objective value decreases as the coarsened graphs become more similar in topology to the original graphs when using DiffPool-TIP. However, when training without $\mathcal{L}_{topo}$, the performance is inferior. Additionally, even when the objective value converges, DiffPool-TIP-NL still exhibits changing topology, whereas DiffPool-TIP maintains a stable topology, possibly benefiting from the stability of PH~\cite{southern2023curvature}. This also suggests that multiple suboptimal topologies may contribute equally to the objective. Our topology invariant pooling strategy consistently selects topologies similar to the original graph, which leads to better performance. Additional visualization results and analysis about the coarsened graphs obtained by DiffPool-TIP-NL can be found in Appendix \ref{appendix_vis_graphs}.

\begin{wraptable}{r}{0.4\textwidth}
    \centering
    \caption{Mean square error ($\downarrow$) of prediction results on ZINC dataset. A \textbf{bold} value indicates the overall winner. }
    \begin{tabular}{l c}
        \toprule
         & \textbf{ZINC} \\
        \midrule
        DiffPool & 0.34$\pm$0.01 \\
        DiffPool-TIP & \textbf{0.28$\pm$0.01} \\
        \midrule
        MinCutPool & 0.42$\pm$0.01 \\
        MinCutPool-TIP & \textbf{0.38$\pm$0.01} \\
        \midrule
        DMoNPool & 0.40$\pm$0.01 \\
        DMoNPool-TIP & \textbf{0.35$\pm$0.01} \\
        \bottomrule
    \end{tabular}

    \label{tab:ZINC exp}
\end{wraptable}

Aside from the graph classification task, Table \ref{tab:ZINC exp} presents the mean and standard deviation of prediction accuracy for the constrained solubility of molecules in the ZINC dataset, where mean square error is used as performance metric. We can observe that TIP can still boost the three pooling methods on regression task, which demonstrates that our proposed method can retain task-related information. Besides, we design an additional set of experiments in Appendix \ref{appendix: cycles experiments}, where the topological structure of the graph is highly task-relevant. \textbf{Ablation study} about the contributions of different modules are shown in Appendix \ref{sec:ablation}. Finally, to empirically demonstrate the expressive power of our proposed method, we provide an experiment on distinguishing non-isomorphic graphs in Appendix \ref{appendix: eval expressiveness}.

\begin{figure}[tb]
	\centering
\includegraphics[width=0.42\textwidth]{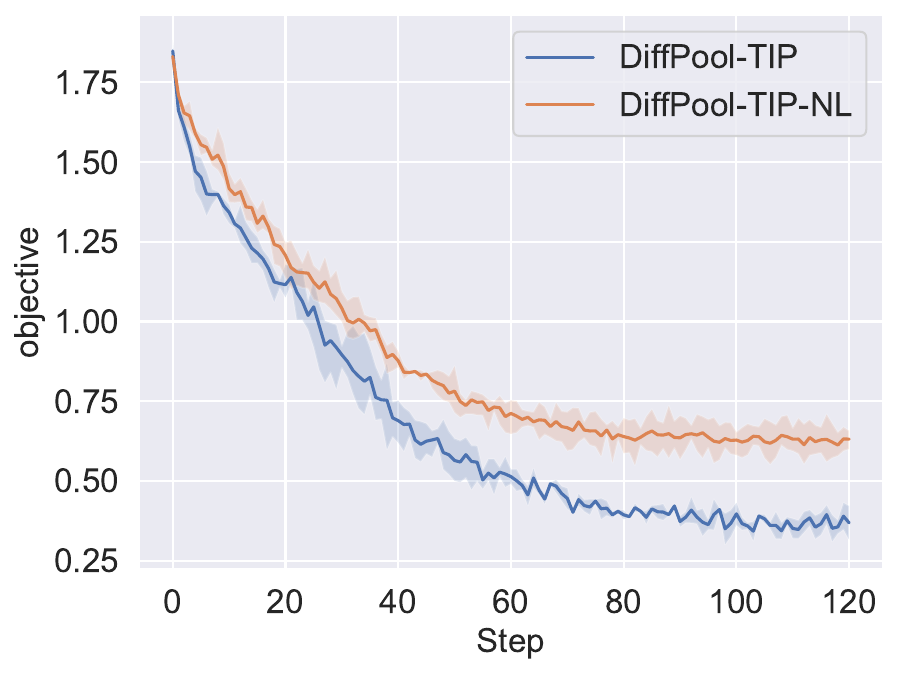}
\includegraphics[width=0.42\textwidth]{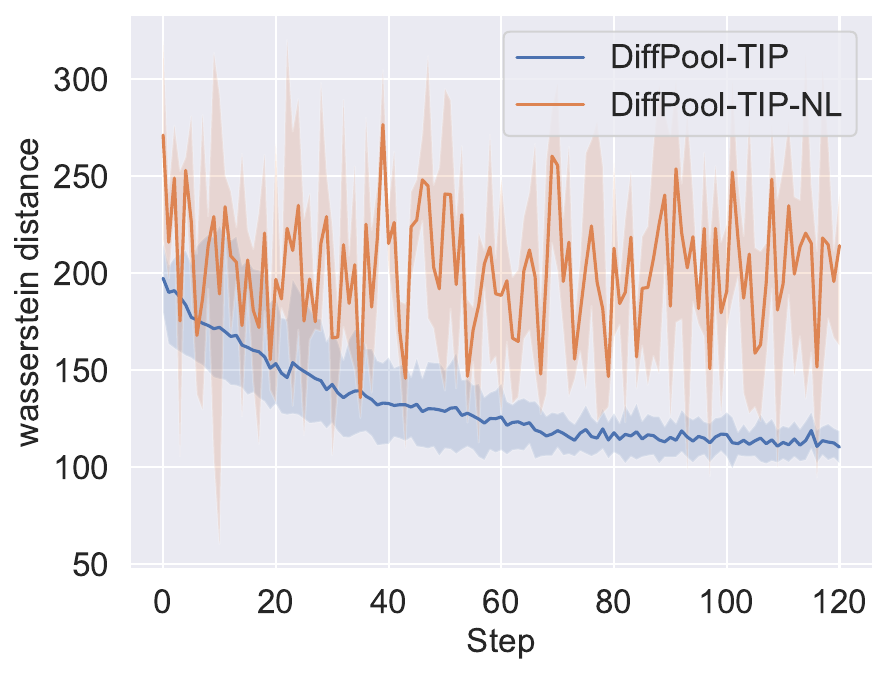}
\caption{The training curves of DiffPool-TIP and DiffPool-TIP-NL on ENZYMES dataset. We show the average values and min-max range of objective and Wasserstein distance for multiple runs.}
\label{train_curve}
\end{figure}

\section{Conclusion}
\label{Sec: conclusion}
In this paper, we developed a method named Topology-Invariant Pooling (TIP) that effectively integrates global topological invariance into graph pooling layers. This approach is inspired by the observation that the filtration operation in PH naturally aligns with the GP process. We theoretically showed that PH is at least as expressive as WL-test, with evident examples demonstrating TIP's expressivity beyond dense pooling methods. Empirically, TIP indeed preserved persistent global topology information, and achieved substantial performance improvement on top of several pooling methods on various datasets, demonstrating strong flexibility and applicability.

The potential limitation of our study is the heavy reliance of the proposed method on circular structures within graphs, potentially hindering its efficacy on tree-like graphs. Besides, our method lacks the ability to discriminate between graphs when the number of connected components is the only distinguishing factor. Our method can be extended to address this limitation by explicitly incorporating this information into the node features during the pooling process.

\section*{Acknowledgements}
This work was supported by the National Key R\&D Program of China under grant 2022YFA1003900. This work is also supported by the Guangdong Provincial Key Laboratory of Mathematical Foundations for Artificial Intelligence (2023B1212010001).

\bibliography{example_paper}
\bibliographystyle{plain}

\newpage
\appendix
\onecolumn

\section{Dense Graph Pooling Methods}
\label{appendix_graph_pooling}
Generally, dense graph pooling methods follow a hierarchical architecture, but their motivations differ. DiffPool suggests that nearby nodes should be pooled together, drawing on insights from link prediction and the assignment matrix $\mathbf{S}$ should be approximate to a one-hot vector so that the clusters are less overlapped with each other.  MinCutPool, on the other hand, adapts the normalized cut as a regularizer for pooling. This encourages strongly connected nodes to be pooled together, ensures orthogonal cluster assignments, and promotes clusters of similar size. Moreover, DMoNPool additionally proposes a regularization to optimize the modularity quality of clusters so that the pooling can generate high quality clusters approach to ground truth. In summary, each of these methods introduces two types of unsupervised loss functions: the reconstruction loss $\mathcal{L}_r$, which regulates how the coarsened graph is reconstructed to retain some cluster structure, and the other is the cluster loss $\mathcal{L}_c$, which prevents convergence to local minima. The detailed formulations of these loss functions are provided in Table \ref{tab:loss}, where $||\cdot||_F$ denotes the Frobenius norm, $H$ denotes the entropy function, $\mathbf{S}_i$ is the $i$-th row of $\mathbf{S}$, $\mathbf{D}$ is the degree matrix, $C$ is the number of clusters, $\mathbf{B} = \mathbf{A} - \frac{\mathbf{D}\mathbf{D}^T}{2m}$ is the modularity matrix, respectively.
\begin{table*}[btp]
    \caption{Unsupervised loss functions of graph pooling}
    \centering
    \begin{tabular}{ccc}
    \toprule
      \textbf{Method}   & $\mathcal{L}_r$ & $\mathcal{L}_c$\\
      \hline
      DiffPool   & $\left\|\mathbf{A}, \mathbf{S} \mathbf{S}^{T}\right\|_F$ & $\frac{1}{n} \sum_{i=1}^n H\left(\mathbf{S}_i\right)$\\
        MinCutPool & $-\frac{\operatorname{Tr}\left(\mathbf{S}^{\top} \mathbf{A} \mathbf{S}\right)}{\operatorname{Tr}\left(\mathbf{S}^{\top} \mathbf{D S}\right)}$ & $\left\|\frac{\mathbf{S}^{\top} \mathbf{S}}{\left\|\mathbf{S}^{\top} \mathbf{S}\right\|_F}-\frac{\mathbf{I}_C}{\sqrt{C}}\right\|_F$\\
        DMoNPool & $-\frac{1}{2 m} \cdot \operatorname{Tr}\left(\mathbf{S}^{\top} \mathbf{B S}\right)$ & $\left\|\frac{\mathbf{S}^{\top} \mathbf{S}}{\left\|\mathbf{S}^{\top} \mathbf{S}\right\|_F}-\frac{\mathbf{I}_C}{\sqrt{C}}\right\|_F + \frac{\sqrt{C}}{n}\left\|\sum_i \mathbf{S}_{i}^{\top}\right\|_F-1$\\
    \bottomrule
    \end{tabular}
    \label{tab:loss}
\end{table*}

\section{Experimental Setup}

\subsection{Implementation detail}
\label{Implementation detail}

\paragraph{Hyperparameters.} For dense pooling methods, the pooling ratio ranges from $[0.1, 0.5]$, the number of pooling layers is 2, and the hidden dimension is selected from $\{32, 64\}$. For the Graclus method we use 2 pooling layers, while for TopK we use 3 pooling layers with a pooling ratio of 0.8. The batch size for all models is uniformly set to 20, and the maximum number of training epochs is 1000. For the graphs obtained from the PyGSP library (ring, torus, grid2d), the number of nodes in each graph is fixed at 64.

\paragraph{Model configuration.} All the methods are implemented using PyTorch and PyG~\cite{paszke2017automatic, fey2019fast}. The compared methods are implemented following the implementations provided in the PyG library \footnote{\url{https://pytorch-geometric.readthedocs.io/en/latest/modules/nn.html}}. In the case of DiffPool, it uses a 3-layer GraphSAGE in each pooling layer, while MinCutPool and DMoNPool use a 1-layer GCN before pooling and a 1-layer GNN~\cite{morris2019weisfeiler} in each pooling layer. Note that in DiffPool, the GNNs in Eqs. \ref{S=} and \ref{X} are different, while in MinCut and DNoNPool they are the same one, as what their do in the original papers. TopK and Graclus are based on a 1-layer GNN~\cite{morris2019weisfeiler}. TOGL is implemented using a 3-layer GraphSAGE as it has demonstrated superior performance on graph classification tasks (see Table \ref{GraphClassi}). For the baseline GNN models (GCN, GIN, and GraphSAGE), we use 3 layers with mean/max pooling. In our model, TIP is incorporated as a plugin to existing pooling methods, without modifying the remaining model structure and hyperparameters. We replace the reconstruction loss $\mathcal{L}_r$ with $\mathcal{L}_{topo}$ while keeping the cluster loss $\mathcal{L}_c$ unchanged. In the case of MinCutPool and DMoNPool, our resampling strategy is added after their original normalization of the coarsened graphs. In preserving topological structure experiments, we initialize node features as the concatenation of the ﬁrst ten eigenvectors of graph Laplacian matrices. Moreover, we follow the settings in previous works~\cite{horn2021topological, hofer2017deep} to extend $\mathcal{\tilde{D}}_1$ as follows:  (1) each cycle is paired with the edge that created it; (2) edges $e$ that do not create a cycle (still in this circle) are assigned a `dummy' tuple value, such as $(f(e), f(e))$; (3) all other edges will be paired with the maximum value of the ﬁltration $f_{max}$. In practice we set $f_{max}$ plus a constant as infinity of destruction time. Therefore, $\mathcal{\tilde{D}}_1$ consists of as many tuples as the number of edges $m$. Code is open-sourced at \url{https://github.com/LOGO-CUHKSZ/TIP.git}.

\subsection{Dataset Statistics}\
\label{sec: dataset}

The statistics of datasets used in this paper are summarized in Table \ref{tab:datasets}, where we show the number of graphs, average number of nodes, average number of edges, number of features, and number of classes. We use the default dataset settings from PyG library \footnote{\url{https://pytorch-geometric.readthedocs.io/en/latest/modules/datasets.html}}. Highly structured datasets
(ring, torus, grid2d) are obtained from the PyGSP library \footnote{\url{https://pygsp.readthedocs.io/en/stable/reference/graphs.html}}.

\begin{table*}\centering
	\caption{Statistics of datasets}
	\label{tab:datasets}
	\setlength{\tabcolsep}{2.2mm}
	\begin{tabular}{@{} c c c c  c c @{}}
		\toprule[1pt]
		Dataset  & \#Graphs &\#Avg.Nodes & \#Avg.Edges  &\#Features & \#Classes \\
		\midrule
        ENZYMES & 600 & 32.63 & 62.14  & 18 & 6\\
        PROTEINS & 1113 & 39.06 & 72.82  & 3&2\\
        NCI1 & 4110 & 29.87 & 32.30  & 37 & 2\\
        NCI109 & 4127 & 29.68 & 32.13 & 38 & 2 \\
        DD &1060 & 232.9 & 583  & 89& 2\\
        IMDB-BINARY & 1000 & 19.8 & 193.1 & 0 & 2\\
        IMDB-MULTI & 1500 & 13 & 65.94 & 0 & 3 \\
        OGBG-MOLHIV & 41127 & 25.5 & 27.5 & 9 & 2 \\
        ZINC & 249456 & 23.2 & 49.8 & 1 & 1\\
        \bottomrule[1pt]
	\end{tabular}
  
\end{table*}

\section{Theoretical Expressivity of TIP}
\label{proof}
\theoremPH*

\begin{proof}
Assume that $\mathcal{G}$ and $\mathcal{G}'$ have $n$ and $n'$ nodes, and the label sequences of them diverge at some iteration $h$, which means there exists at least one label whose count is unique. Let nodes $u$ and $u'$ be the nodes with unique count in $\mathcal{G}$ and $\mathcal{G}'$, respectively. Denote $La^{(h)} := \{l_1, l_2, ...\}$ as an enumeration of the finitely many hashed labels at iteration $h$. We can build a filtration function $f$ by assigning a vertex $v$ with label $l_i$ to its index, i.e. $f(v) := i$ except that $f(u) = n + n' + 1$ and $f(u') = n + n' + 2$. The filtration of edge $(u, w)$ is defined as $f(v, w) := max\{f(v), f(w)\}$, and for isolated nodes $v$, the filtration of self-loop edges is $f(v, v) = f(v)$. Therefore, node with unique label count and its connected edges always correspond to the largest filtration value. Note that the 1-dimensional PD has been extended to have the same cardinality as the number of edges.  If node $u$ or $u'$ forms a circle, the creation of this circle is related to the edge with the largest filtration; if node $u$ or $u'$ does not form a circle, the corresponding edges lie on the diagonal of $\mathcal{\tilde{D}}_1$ with unique coordinates; otherwise node $u$ constitute a circle while $u'$ does not, then the corresponding edges lie in different parts in $\mathcal{\tilde{D}}_1$  and $\mathcal{\tilde{D}}'_1$. Hence, $\mathcal{\tilde{D}}_1 \neq \mathcal{\tilde{D}}'_1$.

\begin{figure}
    \centering
    \includegraphics[width=0.75\textwidth]{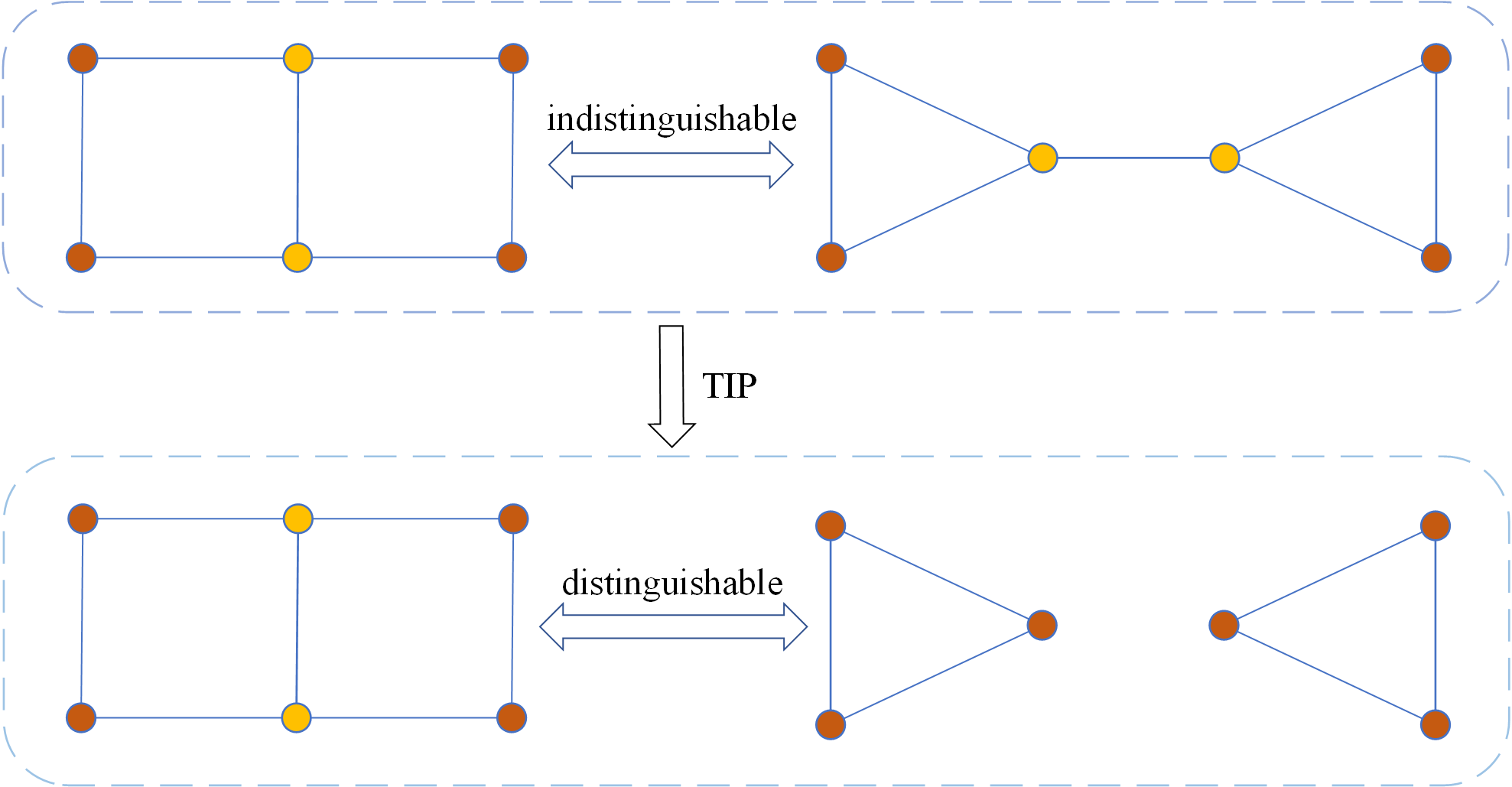}
    \caption{A pair of non-isomorphic graphs that cannot be distinguished by 1-WL but can be distinguished by TIP.}
    \label{isomorphic}
\end{figure}

To demonstrate that TIP is more expressive than other dense pooling methods, we provide examples of graph pairs that cannot be distinguished by 1-WL but can be by TIP. We present an example of such non-isomorphic graphs in Fig. \ref{isomorphic}, where in the second graph the edge connecting two triangles does not form a circle. This edge corresponds to zero persistence and is eliminated in TIP. Consequently, the two originally non-isomorphic graphs can be easily distinguished. Provided that the three sufficient conditions proposed in~\cite{bianchi2023expressive} are satisfied, the pooling layers retain the same level of expressive power as GNN. In TIP, the reduction of node features remains unaltered, thereby fulfilling the three conditions. Additionally, TIP is capable of distinguishing certain non-isomorphic graphs, indicating its superior expressive power compared to conventional dense pooling methods such as DiffPool, MinCutPool, and DMoNPool.

\end{proof}

\begin{prop}
    \textbf{TIP is invariant under isomorphism}.
\end{prop}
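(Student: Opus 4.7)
The plan is to verify that every building block of one TIP layer commutes with node relabeling, and then argue by induction on the number of stacked layers. Fix a permutation $\pi$ acting on the node set, with permutation matrix $\mathbf{P}$, and let $\mathbf{A}\mapsto \mathbf{P}\mathbf{A}\mathbf{P}^\top$, $\mathbf{X}\mapsto \mathbf{P}\mathbf{X}$ be the induced action. I would first recall the standard fact that any message-passing $\mathrm{GNN}(\cdot)$ of the form~\eqref{mpnn} is permutation-equivariant, and therefore the soft assignment $\mathbf{S}^{(l)}$ in~\eqref{S=} transforms as $\mathbf{S}^{(l)}\mapsto \mathbf{P}\mathbf{S}^{(l)}\mathbf{Q}$ for some (possibly identity) permutation $\mathbf{Q}$ over cluster indices; plugging this into~\eqref{SAS} and~\eqref{X} yields $\mathbf{A}^{(l)}\mapsto \mathbf{Q}^\top \mathbf{A}^{(l)}\mathbf{Q}$ and $\mathbf{X}^{(l)}\mapsto \mathbf{Q}^\top \mathbf{X}^{(l)}$. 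So after the ordinary dense pooling step, the coarsened graph is determined up to a relabeling of clusters.

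Next I would handle the two TIP-specific blocks. For the resampling in~\eqref{normalize}, normalization by $\min$ and $\max$ is a scalar operation and the Gumbel-softmax is applied entrywise and independently, so conditional on the coupling of the random noise (or after taking expectations) it commutes with the cluster permutation $\mathbf{Q}$; hence $\mathbf{A}'^{(l)}\mapsto \mathbf{Q}^\top \mathbf{A}'^{(l)}\mathbf{Q}$. For the persistence injection, the filtration $\mathrm{sigmoid}(\Phi(\mathbf{X}^{(l)}))$ is an MLP applied row-wise, which is equivariant, and it is a classical fact that persistent homology depends only on the filtered simplicial complex up to isomorphism; the self-loop augmentation of $\mathcal{D}_1$ only adds diagonal tuples indexed by vertices, and still transforms covariantly. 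Therefore $\mathcal{\tilde{D}}_1$ is the same multiset for both graphs, and $\mathrm{to\_dense}(\mathcal{\tilde{D}}_1[1]-\mathcal{\tilde{D}}_1[0])$ transforms as $\mathbf{Q}^\top(\cdot)\mathbf{Q}$, so the Hadamard product with $\mathbf{A}'^{(l)}$ again preserves the relabeling.

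Finally I would verify that the topological loss $\mathcal{L}_{topo}$ is strictly isomorphism-invariant (not merely equivariant). The transformations in~\eqref{vector} depend only on the tuples in $\mathcal{\tilde{D}}_1$ viewed as a multiset, and the mean $\boldsymbol{\mu}$ and standard deviation $\boldsymbol{\sigma}$ are symmetric functions over this multiset; both quantities are therefore identical for $\mathcal{G}$ and $\pi(\mathcal{G})$ at every layer $l$, so $\mathcal{L}_{topo}$ is unchanged. Combined with the observation that the readout produced by a final order-invariant aggregation is insensitive to the residual cluster permutation $\mathbf{Q}$, this shows that the entire TIP pipeline yields the same output on isomorphic graphs.

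The main obstacle I anticipate is the resampling step: because Gumbel-softmax is stochastic, strict equality requires either coupling the random seeds across the two graphs via $\pi$, or passing to the induced distribution and arguing invariance of that distribution. I would handle this by noting that the Gumbel noise is i.i.d.\ across edges, so the joint law of $\mathbf{A}'^{(l)}$ is exchangeable with respect to the cluster indices, which is the correct notion of invariance for a randomized layer; all downstream objects then inherit invariance in distribution, which is what any practical implementation of TIP actually requires.
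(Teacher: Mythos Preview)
Your proposal is correct and follows the same underlying strategy as the paper: verify that each building block of a TIP layer is permutation-equivariant and conclude invariance of the full pipeline. The paper's proof, however, is considerably more direct. It does not introduce a cluster permutation $\mathbf{Q}$ at all: since $\mathbf{S}^{(l)}$ is produced by an equivariant GNN whose \emph{output} dimension indexes clusters (a fixed coordinate system), permuting the input nodes yields $\mathbf{S}\mapsto \mathbf{P}\mathbf{S}$ with the column indexing unchanged, so the paper simply computes $(\mathbf{S}^\top\mathbf{P}^\top)(\mathbf{P}\mathbf{X})=\mathbf{S}^\top\mathbf{X}$ and $(\mathbf{S}^\top\mathbf{P}^\top)(\mathbf{P}\mathbf{A}\mathbf{P}^\top)(\mathbf{P}\mathbf{S})=\mathbf{S}^\top\mathbf{A}\mathbf{S}$, obtaining literal equality of the coarsened graph rather than equality up to a residual relabeling. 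Your more general treatment with $\mathbf{Q}$ is not wrong, but it is unnecessary here and forces you to carry $\mathbf{Q}$ through resampling, persistence injection, and the readout. On the other hand, your discussion of the Gumbel-softmax stochasticity (invariance in distribution via exchangeability of the i.i.d.\ noise) and your explicit check that $\mathcal{L}_{topo}$ depends only on the multiset $\tilde{\mathcal{D}}_1$ are genuine additions; the paper asserts that ``resampling operations are equivariant'' and invokes a lemma that equivariant filtrations yield equal persistence diagrams, but does not spell out the randomized case or the loss term.
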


To prove this statement, we adopt the following lemma ~\cite{rieck2023expressivity} to show the isomorphic property of PH.

\begin{lemma}
     Let $\mathcal{G}_1$ and $\mathcal{G}_2$ be two isomorphic graphs. For any equivariant filtration $f$, the corresponding persistence diagrams are equal.
\end{lemma}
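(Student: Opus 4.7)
The plan is to unpack the definition of equivariant filtration, show that the two sublevel-set filtrations are isomorphic stage by stage, and conclude by invoking functoriality of persistent homology. Let $\pi : V_1 \to V_2$ be a graph isomorphism between $\mathcal{G}_1 = (V_1, E_1)$ and $\mathcal{G}_2 = (V_2, E_2)$, so that $(u,v) \in E_1$ iff $(\pi(u), \pi(v)) \in E_2$. Equivariance of $f$ means that applying $\pi$ to the input graph transports filtration values correctly, i.e., $f_{\mathcal{G}_2}(\pi(v)) = f_{\mathcal{G}_1}(v)$ for every vertex $v \in V_1$, and analogously for edges $f_{\mathcal{G}_2}(\pi(u), \pi(v)) = f_{\mathcal{G}_1}(u,v)$. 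I would state this explicitly at the top of the proof to fix notation.

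The main step is then to check that the sublevel-set sequences are isomorphic level-by-level. For any threshold $a \in \mathbb{R}$, let $\mathcal{G}_1^{(a)} = f_{\mathcal{G}_1}^{-1}((-\infty,a])$ and $\mathcal{G}_2^{(a)} = f_{\mathcal{G}_2}^{-1}((-\infty,a])$. Using equivariance, a vertex $v$ lies in $V(\mathcal{G}_1^{(a)})$ iff $\pi(v)$ lies in $V(\mathcal{G}_2^{(a)})$, and the same holds for edges because $\max\{f(u),f(v)\}$ is preserved under $\pi$. Hence $\pi$ restricts to a graph isomorphism $\mathcal{G}_1^{(a)} \cong \mathcal{G}_2^{(a)}$ for every $a$, and these restrictions commute with the inclusion maps $\mathcal{G}_k \hookrightarrow \mathcal{G}_{k+1}$. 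In categorical terms, $\pi$ induces an isomorphism of filtered simplicial complexes.

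The conclusion is then a direct application of the functoriality of persistent homology: isomorphic filtered complexes yield isomorphic persistence modules, and isomorphic persistence modules have identical interval (barcode) decompositions, hence identical persistence diagrams $\mathcal{D}_i$ in every dimension. For the 1-dimensional augmented diagram $\tilde{\mathcal{D}}_1$ used in TIP, the same argument applies because the self-loop augmentation and the pairing rules (cycle-creating edge, diagonal placement for non-cycle edges, $f_{\max}$ for unpaired edges) are defined purely in terms of filtration values of edges and vertices, all of which are preserved by $\pi$.

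The only real subtlety, and the step I would spend the most care on, is making the equivariance assumption crisp enough to cover both vertex and edge filtrations simultaneously (and the augmentation conventions); once that definition is in place, the rest is a bookkeeping argument that follows immediately from functoriality, which I would cite rather than reprove.
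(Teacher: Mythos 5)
Your proof is correct and is the standard argument: equivariance gives a level-wise isomorphism of sublevel-set subgraphs commuting with inclusions, hence isomorphic persistence modules and equal diagrams by functoriality, and your remark that the augmentation conventions for $\mathcal{\tilde{D}}_1$ depend only on filtration values (so they are also preserved) is a worthwhile addition. Note that the paper itself does not prove this lemma but imports it from the cited reference, so your write-up supplies a proof the paper omits while following exactly the route that reference takes.
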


In TIP, the filtration $f$ is implemented using MLP, ensuring the equivariant property of filtration. Moreover, our resampling operations in Section \ref{TIP_method} are equivariant. Therefore, the two isomorphic graphs after the resampling and persistence injection operations are still isomorphic to each other. Now we are able to prove Proposition 1.

\begin{proof}
    For \textbf{feature-level invariance}, let $\mathbf{X} \in \mathbb{R}^{n \times d}$ be the node features, $\mathbf{P} \in \{0,1\}^{n \times n}$ be the permutation matrix, $\mathbf{S}\in\mathbb{R}^{n \times n'}$ be the assignment matrix, and $\mathbf{P}\mathbf{X}$ be the permutated node features. The node feature map after pooling is denoted as $\mathbf{X}' \in \mathbb{R}^{n' \times d}$, then we have $\mathbf{X}'=\mathbf{S}^\top\mathbf{X}$.
    
 If we permute $\mathcal{G}$ using a permutation matrix $\mathbf{P}$, the permutated node features after pooling are
$$\mathbf{X}'=(\mathbf{S}^\top\mathbf{P}^\top)(\mathbf{P}\mathbf{X}) = \mathbf{S}^\top\mathbf{X},$$
which proves the isomorphism invariant property of pooling at feature level.

For \textbf{connectivity-level invariance}, the connectivity after pooling is denoted as $\mathbf{A}' \in \mathbb{R}^{n' \times n'}$, then we have $\mathbf{A}' = \mathbf{S}^\top \mathbf{A} \mathbf{S}$. If we permute $\mathcal{G}$ using a permutation matrix $\mathbf{P}$, the permutated connectivity after pooling is
$$\mathbf{A}'=(\mathbf{S}^\top \mathbf{P}^\top) (\mathbf{P} \mathbf{A} \mathbf{P}^\top) (\mathbf{P} \mathbf{S}) = \mathbf{S}^\top \mathbf{A} \mathbf{S}.$$

This completes the proof.
\end{proof}

\section{Empirical Evidence}
\label{appendix:vis}
\begin{figure}[tbp]
    \centering
    \subfigure[DiffPool-layer-1]{
        \includegraphics[width=0.3\textwidth]{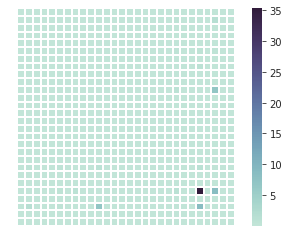}
    \label{fig:7a}
    }
    \subfigure[DiffPool-layer-2]{
	\includegraphics[width=0.3\textwidth]{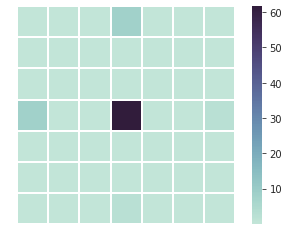}
    \label{fig:7b}
    }
    \\
    \subfigure[MinCutPool-layer-1]{
        \includegraphics[width=0.3\textwidth]{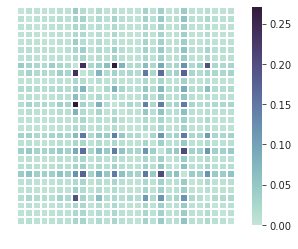}
    \label{fig:7c}
    }
    \subfigure[MinCutPool-layer-2]{
	\includegraphics[width=0.3\textwidth]{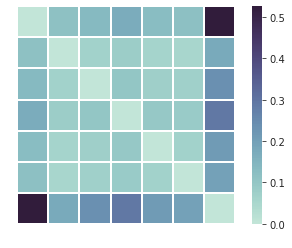}
    \label{fig:7d}
    }
    \\
    \subfigure[DMoNPool-layer-1]{
        \includegraphics[width=0.3\textwidth]{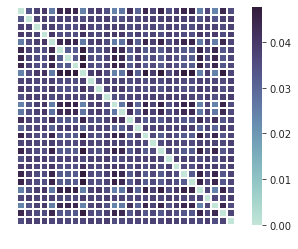}
    \label{fig:7c}
    }
    \subfigure[DMoNPool-layer-2]{
	\includegraphics[width=0.3\textwidth]{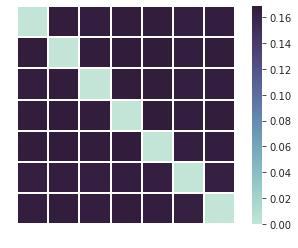}
    \label{fig:7d}
    }
    \caption{Heatmap of the coarsened adjacency matrix in terms of DiffPool, MinCutPool, and DMoNPool on NCI1 dataset.}
    \label{fig.7}
\end{figure}

We conduct experiments on the NCI1 dataset and plot the heatmap of the coarsened adjacency matrix in Fig. \ref{fig.7}, where we can observe that the edge weights in DiffPool may span a wide range due to the involvement of multiple multiplications in their generation. For MinCutPool and DMoNPool, the edge weights are normalized by degree to mitigate numerical explosion. However, this normalization leads to the edge weights becoming excessively smooth and lacking sparsity. Learnable filtration based PH performs effectively on unweighted graphs; however, none of the existing GP methods are capable of appropriately handling the adjacency matrix.

\section{Additional Experiments}
\label{sec:Additional Experiments}

\subsection{Visualization of persistence diagrams}
\label{sec:vis_PD}
\begin{figure}[tb]
    \centering
    \resizebox{\textwidth}{!}
    {
    \begin{tabular}{ccccc}
     & Original &  DiffPool-TIP &  MinCutPool-TIP & DMoNPool-TIP \\
    \rotatebox{90}{\quad \quad \quad ring} & \includegraphics[width=.25\linewidth]{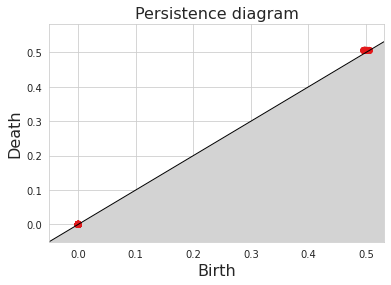} &  \includegraphics[width=.25\linewidth]{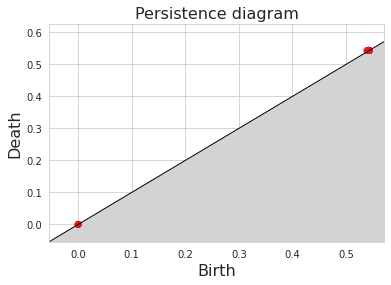} & \includegraphics[width=.25\linewidth]{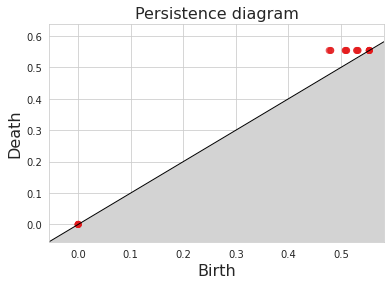} & \includegraphics[width=.25\linewidth]{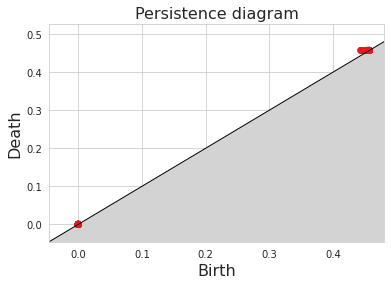} \\

    \rotatebox{90}{\quad \quad \quad grid2d} & \includegraphics[width=.25\linewidth]{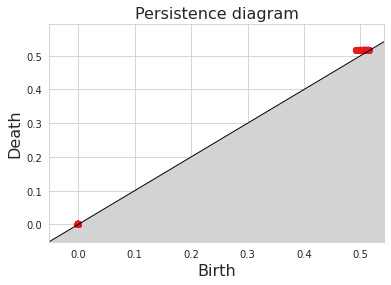} &  \includegraphics[width=.25\linewidth]{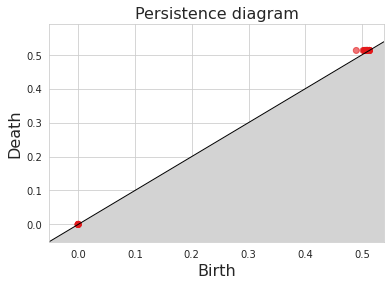} & \includegraphics[width=.25\linewidth]{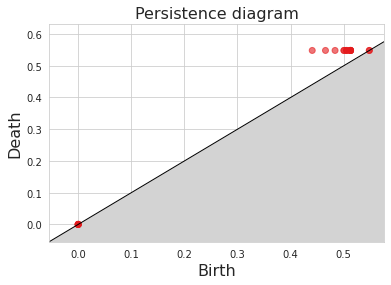} & \includegraphics[width=.25\linewidth]{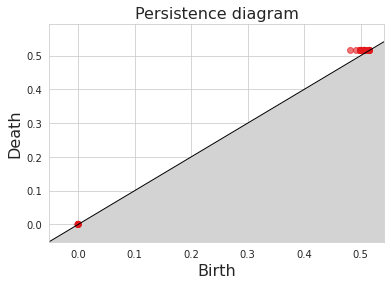}
    
    \label{fig:enter-label}
    \end{tabular}
    }
    \caption{Persistence diagrams of graphs before and after applying TIP in terms of ring and grid2d datasets.}
    \label{fig: vis_pd}
\end{figure}

We visually represent the 1-dimensional PD of graphs before and after applying TIP in terms of ring and grid2d datasets, as shown in Fig. \ref{fig: vis_pd}. As described in Appendix \ref{Implementation detail}, in the original graphs we initialize node features with the eigenvectors of the graph Laplacian matrices. Consequently, the features of different edges exhibit slight variations, resulting in multiple nonoverlapping points in the PDs. Upon applying TIP, we can clearly observe that the one-dimensional topological features related to cycles remain similar to those in the original graphs. This demonstrates TIP's ability to preserve cycles.

\subsection{Running time comparison}
\label{sec:runtime}

We compare the running time (in seconds) of TIP on different datasets. The experiments are conducted using an AMD EPYC 7542 CPU and a single NVIDIA 3090 GPU. We utilize the default settings from the graph classification experiments. We report the average running time of 50 epoches training in Table \ref{tab:runtime}. It is worth noting that TIP is performed $L$ times for $L$ pooling layers, thus the inclusion of TIP does not impose a significant computational burden.

\begin{table}[tb]
    \centering
    \caption{Average running time (seconds) comparisons on different datasets.}
    \label{tab:runtime}
    \begin{tabular}{lccc}
    \toprule
    \multirow{2}{*}{\textbf{Methods}}  &  \multicolumn{3}{c}{\textbf{Datasets}}    \\
\cmidrule{2-4}
                    & NCI1 & PROTEINS & ENZYMES \\
    \midrule
    DiffPool     & 209.48 & 56.55 & 30.61\\
    
    DiffPool-TIP   & 339.37 & 92.06 & 49.65\\

    \midrule

    MinCutPool & 145.99 & 38.22 & 27.34\\
    
    MinCutPool-TIP & 296.06 & 79.42 & 41.17\\

    \midrule

    DMoNPool & 124.89 & 35.07 & 19.35\\

    DMoNPool-TIP & 305.63 & 81.34 & 43.82\\
    
    \bottomrule
    
    \end{tabular}
\end{table}

\subsection{Visualization of coarsened graphs without preserving topology}
\label{appendix_vis_graphs}

We present some coarsened graphs that do not preserve topology (DiffPool-TIP-NL) in Fig. \ref{fig:vis_graph_no_loss}. These graphs contribute equally to the objective in the graph classification task, but their topologies are different. A similar observation was made by~\cite{mesquita2020rethinking}, who found that randomly generated graphs show equivalent performance. In DiffPool-TIP-NL, other topology-related modules in TIP are preserved, allowing some topological information to be injected into the three results shown in Fig. \ref{fig:vis_graph_no_loss}. Guided by the $\mathcal{L}_{topo}$, DiffPool-TIP tends to select the results that are most similar to the original graph among all the options. Experimental results in Fig. \ref{train_curve} demonstrate that this type of topology is superior and leads to better performance on downstream tasks.

\begin{figure*}[tb]
    \centering
    \resizebox{\textwidth}{!}
    {
    \begin{tabular}{ccccc}
     & Original & Result 1 & Result 2 & Result 3 \\
    \rotatebox{90}{\quad \quad ENZYMES} & \includegraphics[width=.25\linewidth]{Figures/GraphClassi/ENZYMES_ori.pdf} &  \includegraphics[width=.25\linewidth]{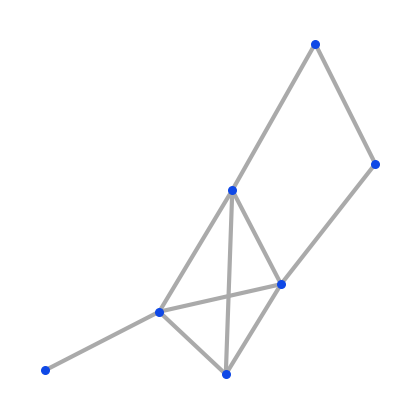} & \includegraphics[width=.25\linewidth]{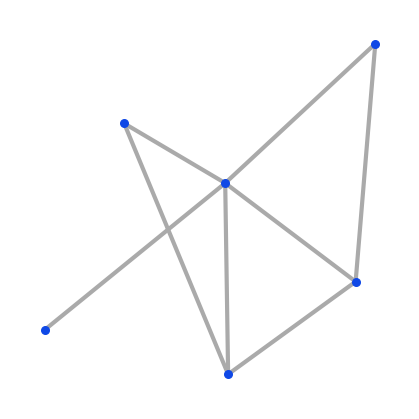} & \includegraphics[width=.25\linewidth]{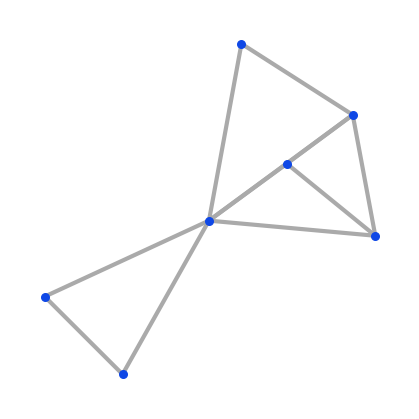}

    \label{fig:enter-label}
    \end{tabular}
    }
    \caption{Several coarsened graphs with DiffPool-TIP-NL that contribute equally to the objective.}
    \label{fig:vis_graph_no_loss}
\end{figure*}

\subsection{Topology relevant experiments}
\label{appendix: cycles experiments}
To further demonstrate that our proposed method can effectively capture the topological features in graphs, we design an experiment where the topological structure of the graph is highly relevant. We generate a synthetic dataset named Cycles, comprising two balanced 2-class sets of 1000 graphs each. This dataset consists of either a single large cycle (class 0) or two connected large cycles (class 1), resembling digital numbers ``0'' and ``8'', respectively. The distinguishing factor between the classes lies in the presence of cycles, highlighting the significance of the graph's topological structure in classification. The node numbers range from 10 to 20, with 3-dimensional random node features generated. For model configuration, we uniformly use 1-GCN plus 1-pooling layer. The evaluation criteria remain consistent with those outlined in our paper. The experimental results in Table \ref{tab: cycles dataset} demonstrate the effectiveness of TIP in leveraging topological features to significantly outperform the comparable pooling methods.

Additionally, to evaluate our method's performance on graphs with different number of connected components, we generated a synthetic dataset named 2-Cycles, comprising two balanced two-class sets of 1,000 graphs each. This dataset consists of either two disconnected large cycles (class 0) or two large cycles connected by a single edge (class 1). The distinguishing factor between the classes is the number of connected components. The node numbers range from 10 to 20, with three-dimensional random node features generated. For the model configuration, we uniformly employed one GCN layer plus one pooling layer. Experimental results in Table \ref{tab: cycles dataset} indicate that our method is not effective in distinguishing similar graphs with different connected components. This aligns with our expectations, as our method does not explicitly incorporate such information, given that most graphs in real-world datasets are connected.

\begin{table}[tbp]
\centering
\caption{Classification results on synthetic datasets}
\label{tab: cycles dataset}
\begin{tabular}{lcc}
\toprule
       & Cycles & 2-Cycles \\
\midrule
DiffPool       &       54.3 $\pm$ 1.1    &  50.0 $\pm$ 2.2  \\  
DiffPool-TIP   &       \textbf{65.1  $\pm$ 2.7}   & 51.4 $\pm$ 2.8    \\
\midrule
MinCutPool     &        54.2  $\pm$ 2.6   & 49.0 $\pm$ 3.6   \\                     
MinCutPool-TIP &       \textbf{65.0 $\pm$ 2.9}   & 50.4 $\pm$ 2.9     \\
\midrule
DMoNPool       &         55.0 $\pm$ 2.7   & 50.0 $\pm$ 2.6   \\
DMoNPool-TIP   &         \textbf{68.7 $\pm$ 2.7 } & 50.0 $\pm$ 3.6   \\
\bottomrule
\end{tabular}
\end{table}

\subsection{Ablation study}
\label{sec:ablation}
To assess the contributions of different modules in our TIP model, we conduct comprehensive ablation studies on  NCI1, PROTEINS, ENZYMES, and IMDB-BINARY datasets. We utilize examine five ablated variants of TIP: (i) with \underline{n}o \underline{r}esampling (TIP-NR), (ii) with \underline{n}o \underline{p}ersistence injection (TIP-NP), (iii) with \underline{n}o topological \underline{l}oss function (TIP-NL), (iv) with \underline{0}-dimensional topological features (TIP-0), (v) with \underline{f}ixed filtration (TIP-F). All these variants are applied on three baseline pooling methods.

As depicted in Table \ref{tab:ablation}, ablating any of the above modules resulte in performance degradation compared to the full model, thus indicating the importance of each designed module in the success of TIP. Additionally, on all three datasets, the resampling module significantly enhance the classification outcomes, while its removal lead to a substantial performance drop. Without resampling, the learnable filtration will treat edges equally, resulting in the inclusion of nonsensical topological information. In some cases, this even impede the model's performance, as observed in the no injection variants which perform worse than their counterparts on the PROTEINS dataset.

Another noteworthy observation is that even in the absence of the topological loss function $\mathcal{L}_{topo}$, GP can still benefit from incorporating PH. This could be attributed to the fact that the learnable filtration can inherently capture certain essential topological information to some extent. Furthermore, our model can still reap the benefits of the topological loss function, which indirectly guides the pooling process, even without explicitly injecting topological information using persistence.

Further, we provide an ablation study of our topological loss term by replacing it with the Wasserstein distance. While the Wasserstein distance is a powerful metric for comparing persistence diagrams, its computation can be computationally intensive, particularly when dealing with high-dimensional vectorized representations. Therefore, it significantly increases our training time in practice. We denote the variant of using Wasserstein distance as ``TIP-W''. Here we present the ablation study results on two datasets. We can observe that TIP-W has competitive performance compared with the full version TIP (with our proposed loss term), and outperforms the variant TIP-NL (no loss term). Initially, we design our $\mathcal{L}_{topo}$ to avoid the high computational complexity of Wasserstein distance, but we are suprised to find that TIP also marginally outperforms TIP-W in numerous instances, potentially attributed to the efficacy of feature transformation and high-order statistical features. These elements serve as a feature augmentation mechanism to enhance the persistence diagrams.

In Section \ref{sec: analysis}, we provide theoretical analysis that 1-dimensional topological features are powerful enough to distinguish non-isomorphic graphs, thus eliminating the necessity of incorporating 0-dimensional features. In this section, we provide empirical evidence about incorporating additional 0-dimensional features to support our claim. The results of variant TIP-0 indicates that the inclusion of 0-dimensional topological features merely increases runtime and has no benefits for the overall performance. This explains why we merely consider 1-dimensional topological features in our method.

As for the ablation of filtration functions, we employ an MLP with randomly initialized and fixed parameters as the filtration function. Using learnable filtrations leads to significant gains over random filtration functions in more than half of the cases. In some cases, randomly initialized filtrations may happen to be close to the learned filtrations, but this does not consistently occur.

Overall, our ablation study supports the indispensability and effectiveness of each module in the TIP model, further underscoring their contributions to its success.

\begin{table}[tbp]
    \centering
    \caption{Test accuracy of graph classification in ablation study experiments.}
    \label{tab:ablation}
    \begin{tabular}{lcccc}
    \toprule
    \multirow{2}{*}{\textbf{Methods}}  &  \multicolumn{4}{c}{\textbf{Datasets}}    \\
\cmidrule{2-5}
                    & NCI1 & PROTEINS & ENZYMES & IMDB-BINARY\\
    \midrule
    DiffPool     & 77.64 $\pm$ 1.86 & 78.81 $\pm$ 3.12 & 48.34 $\pm$ 5.14 & 73.15 $\pm$ 3.30\\
    DiffPool-TIP-NR     & 80.82 $\pm$ 1.71  & 77.89 $\pm$ 4.07 & 55.43 $\pm$ 2.81 & 75.00 $\pm$ 2.64\\
    DiffPool-TIP-NP     & 81.99 $\pm$ 1.15 & 79.30 $\pm$ 1.26 & 62.22 $\pm$ 3.13 & 75.85 $\pm$ 2.85\\
    DiffPool-TIP-NL     & 82.33 $\pm$ 2.14 & 79.11 $\pm$ 2.01 & 58.77 $\pm$ 5.15 & 76.10 $\pm$ 3.78\\
    DiffPool-TIP-W  & 83.02 $\pm$ 1.08 & 78.25 $\pm$ 1.63 & 62.15 $\pm$ 4.43 & \textbf{76.75 $\pm$ 3.66}\\
    DiffPool-TIP-0 & 82.45 $\pm$ 1.40 & 79.12 $\pm$ 1.63 & 56.88 $\pm$ 4.96 & 76.25 $\pm$ 2.33\\
    DiffPool-TIP-F & 83.21 $\pm$ 1.55 & 77.91 $\pm$ 3.46 & 60.24 $\pm$ 5.15 & 75.75 $\pm$ 3.19 \\
    DiffPool-TIP   & \textbf{83.75 $\pm$ 1.70} & \textbf{79.86 $\pm$ 3.12} & \textbf{65.05 $\pm$ 4.24} & 76.40 $\pm$ 3.13\\

    \midrule

    MinCutPool & 77.92 $\pm$ 1.67 & 78.25 $\pm$ 3.84 & 39.83 $\pm$ 2.63 & 73.80 $\pm$ 3.54 \\
    MinCutPool-TIP-NR & 79.68 $\pm$ 1.38 &   78.23 $\pm$ 2.92 & 42.51 $\pm$ 2.83 & 74.35 $\pm$ 1.80\\
    MinCutPool-TIP-NP & 78.81 $\pm$ 2.07 & 78.92 $\pm$ 3.35 & 45.56 $\pm$ 2.81 & 74.65 $\pm$ 3.24\\
    MinCutPool-TIP-NL & 78.48 $\pm$ 1.86 & 78.40 $\pm$ 3.06 & 45.26 $\pm$ 4.14 & 74.90 $\pm$ 3.03\\
    MinCutPool-TIP-W & 80.06 $\pm$ 0.78 & 79.51 $\pm$ 4.29 & 46.12 $\pm$ 1.23 & 74.50 $\pm$ 2.91\\
    MinCutPool-TIP-0 & 78.18 $\pm$ 1.34 & 79.64 $\pm$ 3.04 & 41.34 $\pm$ 1.24 & 74.83 $\pm$ 2.41\\
    MinCutPool-TIP-F & 76.65 $\pm$ 1.72 & 79.40 $\pm$ 3.55 & 44.10 $\pm$ 2.68 & 73.80 $\pm$ 1.72\\
    MinCutPool-TIP & \textbf{80.17 $\pm$ 1.29} & \textbf{79.73 $\pm$ 3.27} & \textbf{46.34 $\pm$ 3.85} & \textbf{75.20 $\pm$ 2.67}\\

    \midrule

    DMoNPool & 78.03 $\pm$ 1.64 & 78.63 $\pm$ 3.89 & 40.82 $\pm$ 3.68 & 73.50 $\pm$ 3.01\\

    DMoNPool-TIP-NR & 79.26 $\pm$ 1.01 & 78.72 $\pm$ 1.30 & 42.51 $\pm$ 4.40 & 73.75 $\pm$ 3.30\\
    DMoNPool-TIP-NP & 79.60 $\pm$ 0.97 & 79.44 $\pm$ 1.68 & 44.36 $\pm$ 3.98 & 73.50 $\pm$ 3.35\\
    DMoNPool-TIP-NL & 79.08 $\pm$ 1.83 & 79.26 $\pm$ 1.70 & 43.35 $\pm$ 3.90 & 74.00 $\pm$ 2.76\\
    DMoNPool-TIP-W & 79.48 $\pm$ 1.50 & 79.70 $\pm$ 2.95 & 45.45 $\pm$ 1.34 & 74.00 $\pm$ 2.91\\
    DMoNPool-TIP-0 & 79.23 $\pm$ 0.89 & 79.24 $\pm$ 3.44 & 41.67 $\pm$ 2.04 & 73.60 $\pm$ 2.57\\
    DMoNPool-TIP-F & 78.83 $\pm$ 1.99 & 79.44 $\pm$ 3.39 & 42.88 $\pm$ 2.25 & 73.60 $\pm$ 2.87 \\
    DMoNPool-TIP & \textbf{79.68 $\pm$ 1.38} & \textbf{79.73 $\pm$ 3.66} & \textbf{45.84 $\pm$ 5.32} & \textbf{74.25 $\pm$ 2.93} \\
    
    \bottomrule
    
    \end{tabular}
\end{table}

\subsection{Evaluation of expressive power}
\label{appendix: eval expressiveness}
The growing interest in the expressive capability of graph pooling has been prominent in recent studies~\cite{bianchi2023expressive}. A graph pooling model based on GNNs is deemed more effective as it can differentiate a larger set of non-isomorphic graphs by producing unique representations for each. Graph pooling integrated with appropriately designed message-passing layers proves to be as competent as the WL test in distinguishing graphs. Understanding the expressive capacity of graph pooling aids in selecting between existing pooling operators or crafting novel ones. Furthermore, to empirically assess the expressive capacity of our proposed approach, TIP, we conduct experiments on the EXPWL1 dataset following the experimental setup detailed in~\cite{bianchi2023expressive}. Each graph pair $\left(\mathcal{G}_i, \mathcal{H}_i\right)$ in EXPWL1 consists of two non-isomorphic graphs distinguishable by a WL test, which encode formulas with opposite SAT outcomes. Therefore, any GNN that has an expressive power equal to the WL test can distinguish them and achieve approximately $100\% $ classiﬁcation accuracy on the dataset. The classification outcomes on the EXPWL1 dataset are shown in Table \ref{table: expwl1}, which reveal the notable improvement in the expressive capacity of graph pooling achieved through our proposed method in empirical evaluations.

    \begin{table}[tbp]
    \centering
    \caption{Classification results on EXPWL1 dataset.}
    \label{table: expwl1}
\begin{tabular}{ll}
\toprule
Pooling        & Test Accuracy \\
\midrule
DiffPool       &       97.0 $\pm$ 2.4       \\  
DiffPool-TIP   &       \textbf{99.3  $\pm$ 0.5}       \\
\midrule
MinCutPool     &        98.8  $\pm$ 0.4      \\                     
MinCutPool-TIP &       \textbf{99.9 $\pm$ 0.1}        \\
\midrule
DMoNPool       &         99.0 $\pm$ 0.7      \\
DMoNPool-TIP   &         \textbf{99.7 $\pm$ 0.1 }    \\
\bottomrule
\end{tabular}
\end{table}


\newpage
\section*{NeurIPS Paper Checklist}

\begin{enumerate}

\item {\bf Claims}
    \item[] Question: Do the main claims made in the abstract and introduction accurately reflect the paper's contributions and scope?
    \item[] Answer: \answerYes{} 
    \item[] Justification: Our focus aims towards boosting graph pooling with persistent homology, motivated by the observation that they two align very well.
    \item[] Guidelines:
    \begin{itemize}
        \item The answer NA means that the abstract and introduction do not include the claims made in the paper.
        \item The abstract and/or introduction should clearly state the claims made, including the contributions made in the paper and important assumptions and limitations. A No or NA answer to this question will not be perceived well by the reviewers. 
        \item The claims made should match theoretical and experimental results, and reflect how much the results can be expected to generalize to other settings. 
        \item It is fine to include aspirational goals as motivation as long as it is clear that these goals are not attained by the paper. 
    \end{itemize}

\item {\bf Limitations}
    \item[] Question: Does the paper discuss the limitations of the work performed by the authors?
    \item[] Answer: \answerYes{} 
    \item[] Justification: In Sec. \ref{Sec: conclusion}, we mentioned that the proposed method relies on circular structures within graphs, potentially hinders its efficacy on tree-like graphs. 
    \item[] Guidelines:
    \begin{itemize}
        \item The answer NA means that the paper has no limitation while the answer No means that the paper has limitations, but those are not discussed in the paper. 
        \item The authors are encouraged to create a separate "Limitations" section in their paper.
        \item The paper should point out any strong assumptions and how robust the results are to violations of these assumptions (e.g., independence assumptions, noiseless settings, model well-specification, asymptotic approximations only holding locally). The authors should reflect on how these assumptions might be violated in practice and what the implications would be.
        \item The authors should reflect on the scope of the claims made, e.g., if the approach was only tested on a few datasets or with a few runs. In general, empirical results often depend on implicit assumptions, which should be articulated.
        \item The authors should reflect on the factors that influence the performance of the approach. For example, a facial recognition algorithm may perform poorly when image resolution is low or images are taken in low lighting. Or a speech-to-text system might not be used reliably to provide closed captions for online lectures because it fails to handle technical jargon.
        \item The authors should discuss the computational efficiency of the proposed algorithms and how they scale with dataset size.
        \item If applicable, the authors should discuss possible limitations of their approach to address problems of privacy and fairness.
        \item While the authors might fear that complete honesty about limitations might be used by reviewers as grounds for rejection, a worse outcome might be that reviewers discover limitations that aren't acknowledged in the paper. The authors should use their best judgment and recognize that individual actions in favor of transparency play an important role in developing norms that preserve the integrity of the community. Reviewers will be specifically instructed to not penalize honesty concerning limitations.
    \end{itemize}

\item {\bf Theory Assumptions and Proofs}
    \item[] Question: For each theoretical result, does the paper provide the full set of assumptions and a complete (and correct) proof?
    \item[] Answer: \answerYes{} 
    \item[] Justification: All assumptions and a complete proof are provided in the Appendix \ref{proof}.
    \item[] Guidelines:
    \begin{itemize}
        \item The answer NA means that the paper does not include theoretical results. 
        \item All the theorems, formulas, and proofs in the paper should be numbered and cross-referenced.
        \item All assumptions should be clearly stated or referenced in the statement of any theorems.
        \item The proofs can either appear in the main paper or the supplemental material, but if they appear in the supplemental material, the authors are encouraged to provide a short proof sketch to provide intuition. 
        \item Inversely, any informal proof provided in the core of the paper should be complemented by formal proofs provided in appendix or supplemental material.
        \item Theorems and Lemmas that the proof relies upon should be properly referenced. 
    \end{itemize}

    \item {\bf Experimental Result Reproducibility}
    \item[] Question: Does the paper fully disclose all the information needed to reproduce the main experimental results of the paper to the extent that it affects the main claims and/or conclusions of the paper (regardless of whether the code and data are provided or not)?
    \item[] Answer: \answerYes{} 
    \item[] Justification: We provide implementation details and hyperparameters in Appendix \ref{Implementation detail}. We also submit codes to click and run.
    \item[] Guidelines:
    \begin{itemize}
        \item The answer NA means that the paper does not include experiments.
        \item If the paper includes experiments, a No answer to this question will not be perceived well by the reviewers: Making the paper reproducible is important, regardless of whether the code and data are provided or not.
        \item If the contribution is a dataset and/or model, the authors should describe the steps taken to make their results reproducible or verifiable. 
        \item Depending on the contribution, reproducibility can be accomplished in various ways. For example, if the contribution is a novel architecture, describing the architecture fully might suffice, or if the contribution is a specific model and empirical evaluation, it may be necessary to either make it possible for others to replicate the model with the same dataset, or provide access to the model. In general. releasing code and data is often one good way to accomplish this, but reproducibility can also be provided via detailed instructions for how to replicate the results, access to a hosted model (e.g., in the case of a large language model), releasing of a model checkpoint, or other means that are appropriate to the research performed.
        \item While NeurIPS does not require releasing code, the conference does require all submissions to provide some reasonable avenue for reproducibility, which may depend on the nature of the contribution. For example
        \begin{enumerate}
            \item If the contribution is primarily a new algorithm, the paper should make it clear how to reproduce that algorithm.
            \item If the contribution is primarily a new model architecture, the paper should describe the architecture clearly and fully.
            \item If the contribution is a new model (e.g., a large language model), then there should either be a way to access this model for reproducing the results or a way to reproduce the model (e.g., with an open-source dataset or instructions for how to construct the dataset).
            \item We recognize that reproducibility may be tricky in some cases, in which case authors are welcome to describe the particular way they provide for reproducibility. In the case of closed-source models, it may be that access to the model is limited in some way (e.g., to registered users), but it should be possible for other researchers to have some path to reproducing or verifying the results.
        \end{enumerate}
    \end{itemize}

\item {\bf Open access to data and code}
    \item[] Question: Does the paper provide open access to the data and code, with sufficient instructions to faithfully reproduce the main experimental results, as described in supplemental material?
    \item[] Answer: \answerYes{} 
    \item[] Justification: All the datasets are obtained from open source libraries, and relevant links are provided in Sec. \ref{sec: dataset}. We also submit codes to click and run.
    \item[] Guidelines:
    \begin{itemize}
        \item The answer NA means that paper does not include experiments requiring code.
        \item Please see the NeurIPS code and data submission guidelines (\url{https://nips.cc/public/guides/CodeSubmissionPolicy}) for more details.
        \item While we encourage the release of code and data, we understand that this might not be possible, so “No” is an acceptable answer. Papers cannot be rejected simply for not including code, unless this is central to the contribution (e.g., for a new open-source benchmark).
        \item The instructions should contain the exact command and environment needed to run to reproduce the results. See the NeurIPS code and data submission guidelines (\url{https://nips.cc/public/guides/CodeSubmissionPolicy}) for more details.
        \item The authors should provide instructions on data access and preparation, including how to access the raw data, preprocessed data, intermediate data, and generated data, etc.
        \item The authors should provide scripts to reproduce all experimental results for the new proposed method and baselines. If only a subset of experiments are reproducible, they should state which ones are omitted from the script and why.
        \item At submission time, to preserve anonymity, the authors should release anonymized versions (if applicable).
        \item Providing as much information as possible in supplemental material (appended to the paper) is recommended, but including URLs to data and code is permitted.
    \end{itemize}

\item {\bf Experimental Setting/Details}
    \item[] Question: Does the paper specify all the training and test details (e.g., data splits, hyperparameters, how they were chosen, type of optimizer, etc.) necessary to understand the results?
    \item[] Answer: \answerYes{} 
    \item[] Justification: Experimental settings are provided in \ref{Sec: experimental setup}.
    \item[] Guidelines:
    \begin{itemize}
        \item The answer NA means that the paper does not include experiments.
        \item The experimental setting should be presented in the core of the paper to a level of detail that is necessary to appreciate the results and make sense of them.
        \item The full details can be provided either with the code, in appendix, or as supplemental material.
    \end{itemize}

\item {\bf Experiment Statistical Significance}
    \item[] Question: Does the paper report error bars suitably and correctly defined or other appropriate information about the statistical significance of the experiments?
    \item[] Answer: \answerYes{} 
    \item[] Justification: The error bars are provided in each table related to experiments.
    \item[] Guidelines:
    \begin{itemize}
        \item The answer NA means that the paper does not include experiments.
        \item The authors should answer "Yes" if the results are accompanied by error bars, confidence intervals, or statistical significance tests, at least for the experiments that support the main claims of the paper.
        \item The factors of variability that the error bars are capturing should be clearly stated (for example, train/test split, initialization, random drawing of some parameter, or overall run with given experimental conditions).
        \item The method for calculating the error bars should be explained (closed form formula, call to a library function, bootstrap, etc.)
        \item The assumptions made should be given (e.g., Normally distributed errors).
        \item It should be clear whether the error bar is the standard deviation or the standard error of the mean.
        \item It is OK to report 1-sigma error bars, but one should state it. The authors should preferably report a 2-sigma error bar than state that they have a 96\% CI, if the hypothesis of Normality of errors is not verified.
        \item For asymmetric distributions, the authors should be careful not to show in tables or figures symmetric error bars that would yield results that are out of range (e.g. negative error rates).
        \item If error bars are reported in tables or plots, The authors should explain in the text how they were calculated and reference the corresponding figures or tables in the text.
    \end{itemize}

\item {\bf Experiments Compute Resources}
    \item[] Question: For each experiment, does the paper provide sufficient information on the computer resources (type of compute workers, memory, time of execution) needed to reproduce the experiments?
    \item[] Answer: \answerYes{} 
    \item[] Justification: We provide sufficient information on the computer resources in Sec. \ref{sec:runtime}.
    \item[] Guidelines:
    \begin{itemize}
        \item The answer NA means that the paper does not include experiments.
        \item The paper should indicate the type of compute workers CPU or GPU, internal cluster, or cloud provider, including relevant memory and storage.
        \item The paper should provide the amount of compute required for each of the individual experimental runs as well as estimate the total compute. 
        \item The paper should disclose whether the full research project required more compute than the experiments reported in the paper (e.g., preliminary or failed experiments that didn't make it into the paper). 
    \end{itemize}
    
\item {\bf Code Of Ethics}
    \item[] Question: Does the research conducted in the paper conform, in every respect, with the NeurIPS Code of Ethics \url{https://neurips.cc/public/EthicsGuidelines}?
    \item[] Answer: \answerYes{} 
    \item[] Justification: Yes.
    \item[] Guidelines:
    \begin{itemize}
        \item The answer NA means that the authors have not reviewed the NeurIPS Code of Ethics.
        \item If the authors answer No, they should explain the special circumstances that require a deviation from the Code of Ethics.
        \item The authors should make sure to preserve anonymity (e.g., if there is a special consideration due to laws or regulations in their jurisdiction).
    \end{itemize}

\item {\bf Broader Impacts}
    \item[] Question: Does the paper discuss both potential positive societal impacts and negative societal impacts of the work performed?
    \item[] Answer: \answerNA{} 
    \item[] Justification: This paper presents work whose goal is to advance the field of Machine Learning. There are many potential societal consequences of our work, none which we feel must be specifically highlighted here.
    \item[] Guidelines:
    \begin{itemize}
        \item The answer NA means that there is no societal impact of the work performed.
        \item If the authors answer NA or No, they should explain why their work has no societal impact or why the paper does not address societal impact.
        \item Examples of negative societal impacts include potential malicious or unintended uses (e.g., disinformation, generating fake profiles, surveillance), fairness considerations (e.g., deployment of technologies that could make decisions that unfairly impact specific groups), privacy considerations, and security considerations.
        \item The conference expects that many papers will be foundational research and not tied to particular applications, let alone deployments. However, if there is a direct path to any negative applications, the authors should point it out. For example, it is legitimate to point out that an improvement in the quality of generative models could be used to generate deepfakes for disinformation. On the other hand, it is not needed to point out that a generic algorithm for optimizing neural networks could enable people to train models that generate Deepfakes faster.
        \item The authors should consider possible harms that could arise when the technology is being used as intended and functioning correctly, harms that could arise when the technology is being used as intended but gives incorrect results, and harms following from (intentional or unintentional) misuse of the technology.
        \item If there are negative societal impacts, the authors could also discuss possible mitigation strategies (e.g., gated release of models, providing defenses in addition to attacks, mechanisms for monitoring misuse, mechanisms to monitor how a system learns from feedback over time, improving the efficiency and accessibility of ML).
    \end{itemize}
    
\item {\bf Safeguards}
    \item[] Question: Does the paper describe safeguards that have been put in place for responsible release of data or models that have a high risk for misuse (e.g., pretrained language models, image generators, or scraped datasets)?
    \item[] Answer: \answerNA{} 
    \item[] Justification: No use of pretained models.
    \item[] Guidelines:
    \begin{itemize}
        \item The answer NA means that the paper poses no such risks.
        \item Released models that have a high risk for misuse or dual-use should be released with necessary safeguards to allow for controlled use of the model, for example by requiring that users adhere to usage guidelines or restrictions to access the model or implementing safety filters. 
        \item Datasets that have been scraped from the Internet could pose safety risks. The authors should describe how they avoided releasing unsafe images.
        \item We recognize that providing effective safeguards is challenging, and many papers do not require this, but we encourage authors to take this into account and make a best faith effort.
    \end{itemize}

\item {\bf Licenses for existing assets}
    \item[] Question: Are the creators or original owners of assets (e.g., code, data, models), used in the paper, properly credited and are the license and terms of use explicitly mentioned and properly respected?
    \item[] Answer: \answerYes{} 
    \item[] Justification: Properly credited.
    \item[] Guidelines:
    \begin{itemize}
        \item The answer NA means that the paper does not use existing assets.
        \item The authors should cite the original paper that produced the code package or dataset.
        \item The authors should state which version of the asset is used and, if possible, include a URL.
        \item The name of the license (e.g., CC-BY 4.0) should be included for each asset.
        \item For scraped data from a particular source (e.g., website), the copyright and terms of service of that source should be provided.
        \item If assets are released, the license, copyright information, and terms of use in the package should be provided. For popular datasets, \url{paperswithcode.com/datasets} has curated licenses for some datasets. Their licensing guide can help determine the license of a dataset.
        \item For existing datasets that are re-packaged, both the original license and the license of the derived asset (if it has changed) should be provided.
        \item If this information is not available online, the authors are encouraged to reach out to the asset's creators.
    \end{itemize}

\item {\bf New Assets}
    \item[] Question: Are new assets introduced in the paper well documented and is the documentation provided alongside the assets?
    \item[] Answer: \answerYes{}, 
    \item[] Justification: 
    \item[] Guidelines:
    \begin{itemize}
        \item The answer NA means that the paper does not release new assets.
        \item Researchers should communicate the details of the dataset/code/model as part of their submissions via structured templates. This includes details about training, license, limitations, etc. 
        \item The paper should discuss whether and how consent was obtained from people whose asset is used.
        \item At submission time, remember to anonymize your assets (if applicable). You can either create an anonymized URL or include an anonymized zip file.
    \end{itemize}

\item {\bf Crowdsourcing and Research with Human Subjects}
    \item[] Question: For crowdsourcing experiments and research with human subjects, does the paper include the full text of instructions given to participants and screenshots, if applicable, as well as details about compensation (if any)? 
    \item[] Answer: \answerNA{} 
    \item[] Justification: 
    \item[] Guidelines:
    \begin{itemize}
        \item The answer NA means that the paper does not involve crowdsourcing nor research with human subjects.
        \item Including this information in the supplemental material is fine, but if the main contribution of the paper involves human subjects, then as much detail as possible should be included in the main paper. 
        \item According to the NeurIPS Code of Ethics, workers involved in data collection, curation, or other labor should be paid at least the minimum wage in the country of the data collector. 
    \end{itemize}

\item {\bf Institutional Review Board (IRB) Approvals or Equivalent for Research with Human Subjects}
    \item[] Question: Does the paper describe potential risks incurred by study participants, whether such risks were disclosed to the subjects, and whether Institutional Review Board (IRB) approvals (or an equivalent approval/review based on the requirements of your country or institution) were obtained?
    \item[] Answer: \answerNA{} 
    \item[] Justification: 
    \item[] Guidelines:
    \begin{itemize}
        \item The answer NA means that the paper does not involve crowdsourcing nor research with human subjects.
        \item Depending on the country in which research is conducted, IRB approval (or equivalent) may be required for any human subjects research. If you obtained IRB approval, you should clearly state this in the paper. 
        \item We recognize that the procedures for this may vary significantly between institutions and locations, and we expect authors to adhere to the NeurIPS Code of Ethics and the guidelines for their institution. 
        \item For initial submissions, do not include any information that would break anonymity (if applicable), such as the institution conducting the review.
    \end{itemize}

\end{enumerate}

\end{document}